\documentclass[11pt]{article}

\usepackage[margin=1in]{geometry}

\usepackage{microtype}
\usepackage{graphicx}
\usepackage{subcaption}
\usepackage{booktabs}
\usepackage{url}
\usepackage[ruled,vlined]{algorithm2e}
\usepackage{stmaryrd}
\usepackage{tabularx}
\usepackage{multirow}
\usepackage{enumitem}
\usepackage{makecell}
\usepackage{siunitx}
\usepackage{wrapfig}
\usepackage{bm}
\usepackage{hyperref}
\usepackage{natbib}

\usepackage{amsmath}
\usepackage{amssymb}
\usepackage{mathtools}
\usepackage{amsthm}

\usepackage[capitalize,noabbrev]{cleveref}

\newcommand{\NA}{\textendash}

\newtheorem{proposition}{Proposition}
\usepackage[most]{tcolorbox}
\usepackage[textsize=tiny]{todonotes}
\newcommand{\Acc}{\mathrm{Acc}}
\newcommand{\nllm}{n_{\mathrm{LLM}}}
\newcommand{\nprompt}{n_{\mathrm{prompt}}}

\tcbset{
  promptbox/.style={
    enhanced,
    breakable,
    colback=gray!5,
    colframe=black!70,
    coltitle=white,
    fonttitle=\bfseries,
    title style={fill=black!70},
    boxrule=0.7pt,
    arc=4pt, outer arc=4pt,
    left=6pt, right=6pt, top=6pt, bottom=6pt,
  }
}

\newcommand{\PromptBox}[2]{%
  \begin{tcolorbox}[promptbox,title={#1}]
  #2
  \end{tcolorbox}
}

\title{Mixture of Complementary Agents for Robust LLM Ensemble}
\author{%
  Yichi Zhang \\
  DIMACS, Rutgers University \\
  \texttt{yz1636@dimacs.rutgers.edu}
  \and
  Kevin Lu \\
  Department of Mathematics, Rutgers University \\
  \texttt{kll160@scarletmail.rutgers.edu}
  \and
  Yuang Zhang \\
  Department of Computer Science, George Mason University \\
  \texttt{yzhang78@gmu.edu}
  \and
  Jie Gao \quad Lirong Xia \\
  Department of Computer Science, Rutgers University \\
  \texttt{jg1555@cs.rutgers.edu, lirong.xia@rutgers.edu}
  \and
  Fang-Yi Yu \\
  Department of Computer Science, George Mason University \\
  \texttt{fangyiyu@gmu.edu}
}
\date{}
\begin{document}
\maketitle

\begin{abstract}

  Multi-AI collaboration\textemdash such as ensembling or debating large language models (LLMs)\textemdash is a promising paradigm for aggregating information and boosting performance. A foundational step in these pipelines is to feed the responses of several \emph{proposer} LLMs into a \emph{summarizer} LLM, which synthesizes a better answer. However, choosing which proposers to include is non-trivial. Existing approaches primarily focus either on accuracy (picking the strongest models) or diversity (ensuring variety), and often overlook the interactions among proposers and with the summarizer.
  We reframe proposer selection as a combinatorial selection problem akin to feature selection, where the value of an LLM lies in its \emph{complementarity} with others. However, directly applying standard feature-selection algorithms is impractical in the LLM setting due to prohibitive time complexity. Motivated by this limitation, we explore an extensive range of computationally feasible, greedy-style selection algorithms that assess complementarity using a small labeled set. Our experiments validate complementarity as a guiding principle for proposer selection and identify methods that achieve the best performance–cost trade-offs in practice.
\end{abstract}

\section{Introduction}

As today's Large language model (LLM) ecosystem fragments into numerous models with diverse expertise, collaboration among LLMs has become promising and sometimes necessary for tackling emerging tasks such as mathematical reasoning \citep{du2023improving}, code generation \citep{mahmud2025enhancing}, and complex decision-making \citep{wu2023autogen}.
A convenient instantiation is \emph{ensemble after inference}, which aggregates the LLM outputs after the generation of full responses.
This includes well-studied frameworks such as \emph{LLM debating} \citep{du2023improving,estornell2024multi,chan2023chateval}, in which multiple models iteratively exchange arguments before a final decision is reached, and \emph{mixture-of-agents (MoA)} \citep{wang2024mixture,li2025rethinkingmoa}, which uses layered and summarization schemes to combine diverse model outputs.

A fundamental step in the ensemble framework is inputting the responses of $N$ LLM-prompt pairs \textemdash the \emph{proposers}\footnote{The same LLM under different prompts can be viewed as different proposers.}\textemdash into an aggregating LLM\textemdash the \emph{summarizer}\textemdash which synthesizes a potentially improved answer.
Selecting which proposers to include is therefore critical: for a large proposer pool, it is impractical and inefficient to input responses from every available proposer due to context-window limits and the degraded inference ability \citep{liu2023lost}.
Existing methods often choose a small set of proposers based on their independent performance, primarily following two heuristics: (i) \emph{accuracy-seeking}\textemdash prioritize high-accuracy proposers or even a single top model with multiple samples \citep{li2025rethinkingmoa,jiang2023llmblender}, and (ii) \emph{diversity-seeking}\textemdash explicitly mix heterogeneous outputs or prompts to avoid reinforcing similar mistakes \citep{lau2024dipper, wang2024mixture}.

However, both heuristics overlook a decisive factor: the complementarity of proposers both with one another and with the summarizer. 
We argue these team effects, rather than individual quality or pairwise diversity, ultimately determine ensemble performance.
In particular, accuracy-seeking methods rank proposers only by their individual performance, while diversity-seeking methods reward variance regardless of quality.
We instead propose \emph{mixture-of-complementary-agents \textbf{(complementary-MoA)}}\textemdash a framework that selects proposers for how well they work together as a team and with the summarizer.
The importance of complementarity can be observed from \cref{fig:single_proposer}, which compares summarizer accuracy when inputting (i) the individually most accurate proposer versus (ii) the proposer that most complements the summarizer. 
In this example, we consistently observe a nontrivial gap between the two choices, and furthermore, the most-complementary proposer is sometimes weak on its own. 
The upshot is both a promise and a challenge: as the ensemble size $k$ (the number of proposers selected for input) grows, the optimal selection can yield substantial gains, yet it also complicates the search, since optimal teams cannot be inferred from individual performance alone.

\begin{figure}[t]
    \centering 
        \includegraphics[width=0.68\textwidth]{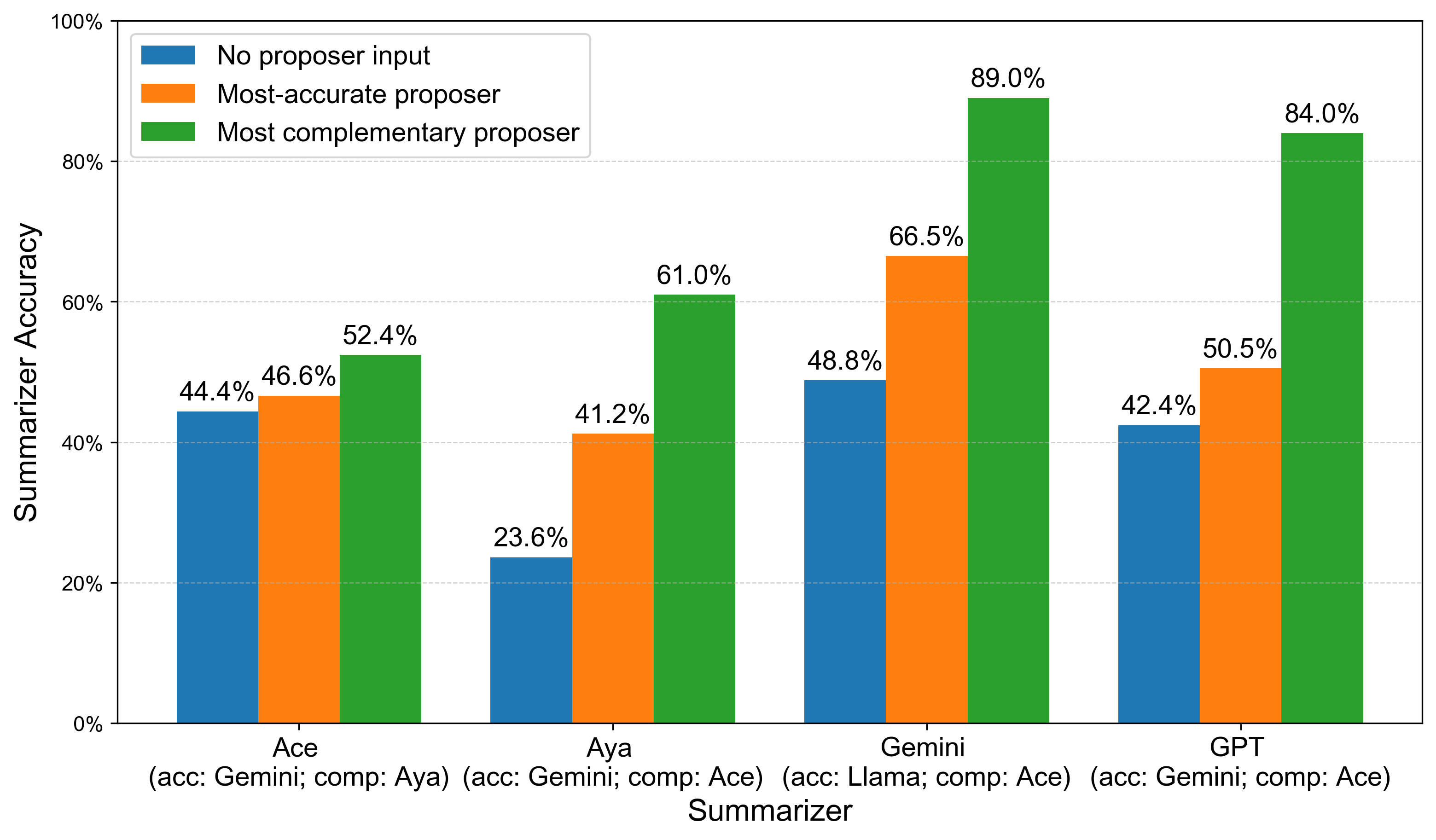}
    \caption{Summarizer accuracies on AIME \citep{dolbokostya_aime_imo_2025} when inputting the \textbf{most accurate} proposer vs.~the \textbf{most complementary} proposer. For each summarizer $s$, the proposer pool is \{Qwen3-32B, Sky-T1-32B-Preview, Aya-expanse-32B, Gemini-1.5-Pro, Llama-3.3-70B-Instruct, AceReason-Nemotron, and GPT-4o\}, excluding $s$ itself.}
    \label{fig:single_proposer}
\end{figure}


In this paper, we study \textbf{efficient proposer selection for multi-LLM ensembling} given a summarizer,
with a focus on selecting complementary proposers rather than merely strong or diverse ones.
Although proposer selection bears superficial resemblance to classical problems such as feature selection or data acquisition, the LLM setting introduces fundamentally new challenges: each evaluation of a candidate proposer team requires expensive summarizer calls, so wrapper-style methods are computationally prohibitive at scale.
As a result, existing methods cannot be directly applied, and naive adaptations are prohibitively slow or brittle in practice.
See \cref{sec:related} for more discussions.


Using multiple-choice QA as a concrete testbed, we frame proposer selection as a complementarity-driven optimization problem where the goal is to select a small ensemble that maximizes downstream accuracy while minimizing summarizer calls. This formulation captures both cross-model diversity and intra-model prompt variation, which prior work has shown to be a major source of performance gains \citep{li2025rethinkingmoa, lau2024dipper}.

Motivated by this perspective, we develop a family of proposer-selection algorithms that navigate the accuracy–efficiency trade-off.
We first introduce \emph{model-first greedy}, a wrapper-style method that keeps the summarizer in the loop but reduces query complexity, measured by the total number of summarizer calls required for proposer selection.
We select the winning model at each step based on the average marginal gain its prompt variants provide to the current ensemble $S$. Once a model is selected, we identify its best-performing prompt instance and add that model-prompt pair to $S$.
Model-first greedy reduces query complexity based on the intuition that diversity across models matters more than diversity induced by different prompts within the same model.
To further reduce query complexity, we propose two algorithms that consider label-level complementarity. 
\emph{Truth-prediction greedy} selects proposers based on how well their reported labels help predict the ground truth; \emph{oracle-surrogate greedy} first fits a simple surrogate of the oracle and then selects proposers based on their marginal contributions measured by the surrogate model.
Both methods rely only on label-level statistics and therefore require no\textemdash or only light\textemdash summarizer calls.

We conduct an extensive empirical study across three popular reasoning benchmarks, spanning multiple proposer pools (a dominating-LLM regime and a mixed-crowd regime), different summarizers, and a range of ensemble sizes.
This evaluation reveals a consistent pattern: commonly used heuristics perform well only in narrow regimes and fail unpredictably elsewhere.
In contrast, our complementarity-guided methods, including the truth-prediction greedy which requires no summarizer call, are consistently robust across all scenarios.
Moreover, we frequently observe substantial gains from model-first greedy over the strongest baseline, underscoring that explicitly optimizing for complementarity is crucial in ensemble frameworks.

In summary, our main contributions are threefold:
\begin{itemize}[leftmargin=3em, labelsep=0.5em]
    \item We identify complementarity as a key, yet overlooked objective in agent-level LLM ensembles and propose a more principled proposer-selection framework, called complementary-MoA, that explicitly optimizes it.
    \item Inspired by feature-selection methods, we present three complementarity-driven selection algorithms designed around LLM ensembles, where evaluating each proposer requires expensive model calls. The resulting methods span a spectrum of accuracy--efficiency trade-offs, giving practitioners a principled way to choose under different query budgets.
    \item Through large-scale experiments, we provide a systematic and comprehensive comparison of LLM-compatible proposer-selection strategies, revealing the failure modes of existing heuristics and demonstrating that complementarity-based selection delivers the most reliable performance across all tested settings.
\end{itemize}

\section{Related Work}
\label{sec:related}

\textbf{Agent-Level Ensemble. }
LLM ensembles can be constructed at multiple stages of the inference pipeline \citep{chen2025harnessing}.
We focus on \emph{agent-level} ensembling, which treats each LLM as a black box.
A closely related paradigm is \emph{mixture-of-agents (MoA)} \citep{wang2024mixture}, a layered collaboration scheme in which, at a given layer, multiple proposers submit responses that are then aggregated by a summarizer.
\citet{wang2024mixture} show that MoA effectively aggregates complementary signals, often yielding more reliable outputs than a single stronger model. 
A follow-up study challenges this design by demonstrating that repeatedly querying a single powerful LLM can also boost MoA-style performance \citep{li2025rethinkingmoa}.
Another line of related work is \emph{LLM debate} \citep{du2023improving,estornell2024multi,chan2023chateval,wang2023chatgpt, baek2026dfuser}, where multiple models iteratively critique and refine one another’s responses that can often result in a consensus outperforming a single model.
However, as \citet{estornell2024multi} point out, sharing all agents’ responses is not always optimal, where they observe that selecting a subset of LLMs that maximizes the mutual information between agents can be more effective. 
Our work targets the foundational step in the above frameworks\textemdash the $N\to1$ summarization\textemdash with particular emphasis on proposing a more principled way to decide which proposers to select for the summarizer.

Conditional mutual information is also a powerful tool for evaluating the informativeness of an agent's marginal contribution beyond the current information \citep{ElicitingLu2024, zhang2025evaluating}. In concurrent work that applies a related idea to ensemble selection, \citet{turkmen2026dontpick} explore complementarity in LLM ensembles by greedily maximizing the marginal mutual information between the proposer's suggested labels and the ground truth. While our high-level objectives align, their method is restricted to binary labels and does not account for the synergy between proposers and the summarizer. As illustrated in \cref{subsec:counterexample,subsec:comparison}, the summarizer significantly influences the composition of the optimal proposer set --- a factor that label-level selection algorithms cannot capture.

\textbf{Training-Based Ensemble. }
Prior literature has also explored the idea of training parametric meta-models to decide, per query, which LLM (or which LLM's output) to trust.
For example, fusion methods train a small network on features from multiple LLMs\textemdash e.g., concatenated probabilities or last-layer embeddings\textemdash to predict the true label \citep{jiang2023llmblender, wang2023fusing}.
Routing methods learn a delegator that selects the most suitable agents for various tasks, e.g., RouteLLM uses human preference data to better trade off cost and quality \citep{ong2024routellm}, and ZOOTER learns a router based on distilling rewards on training queries \cite{lu2023routing}.
Similarly, cost-aware cascades like FrugalGPT focus on learning when to use stronger but more expensive models  \citep{chen2023frugalgpt}.
Unlike prior training-based ensembles, our framework avoids substantial supervised datasets: a few hundred examples suffice to learn the summarizer’s behavior for better proposer selection.
This light training also makes it compatible with closed-source LLM summarizers, whereas past work either does not use an LLM summarizer or requires open-source access (e.g., logits/weights).

\textbf{Feature-Selection. }
Our problem is naturally relevant to feature selection, where the goal is to select a small subset of features that optimize the performance of an ML model. 
One of the most classic example is Wrapper \citep{kohavi1998wrapper}, which evaluates features by repeatedly training a model\textemdash using forward/backward search.
The selection of features can also be implemented by inducing sparsity during training, with examples like LASSO \citep{tibshirani1996regression} and LARS \citep{KolterLARS2009}.
Furthermore, it is often beneficial to filter likely weak features without retraining a predictor based on information-theoretic (e.g., mRMR \citep{peng2005feature}) or neighborhood criteria (e.g., Relief \citep{urbanowicz2018relief}).
However, two challenges limit applicability to our setting.
First, wrapper-style methods demand extensive retraining, while filter/embedded approaches operate only at the label level and thus ignore the LLM summarizer’s error-correcting behavior.\footnote{That said, in \cref{subsec:comparison} we show that even label-level selection can produce more robust ensembles than existing baselines.}
Second, the summarizer’s performance is often non-monotone in the set of agents, making standard marginal-gain scoring unreliable; this motivates new evaluation metrics for agent contribution\textemdash e.g., our $k$-greedy algorithm in \cref{subsubsec:label_level}.

\section{Problem Statement}

We consider a dataset of multiple-choice questions $\mathcal{Q}$, and each question $q\in \mathcal{Q}$ has a ground-truth label $Y_q \in \mathcal{Y}$.
We assume true labels are available on a validation subset $\mathcal{Q}_T \subset \mathcal{Q}$ of size $m = |\mathcal{Q}_T|$, while the remaining questions require inference (test data).

There are $N$ \emph{proposers}. Proposer $i$ provides for question $q$ a response $R_{i,q} = (X_{i,q}, Z_{i,q})$, where $X_{i,q} \in \mathcal{Y}$ is a proposed label and $Z_{i,q}$ is textual supporting reasoning (e.g., chain-of-thought reasoning).
In our setting, we permit multiple proposers to originate from a single LLM by varying the prompt.
This is inspired by prior studies \citep{li2025rethinkingmoa, lau2024dipper}, showing that feeding multiple responses from the same model to the summarizer can benefit the ensemble.
Let $\nprompt$ and $\nllm$ be the number of prompts and models; the total number of proposers is then $N=\nprompt\cdot \nllm$.

To improve accuracy, a \emph{summarizer} aggregates multiple proposer responses, and outputs a potentially more accurate label.
Due to practical constraints (e.g., LLMs often have strict input context limits), we aim to select a (small) subset of proposers for the ensemble.
Formally, given the ensemble size $k$ and a subset $S \subseteq [N]$ with $|S|=k$, the summarizer outputs $f(\bm{R}_{S,q})$, where $\bm{R}_{S,q} = (R_{i,q})_{i\in S}$.
Both proposer and summarizer outputs are stochastic, and the key design choice is which proposers to select as input to the summarizer.

We evaluate a selection $S$ by the summarizer accuracy on test data:
\[\Acc_f(S) = \frac{1}{|\mathcal{Q}\backslash\mathcal{Q}_T|} \sum_{q\in \mathcal{Q}\backslash\mathcal{Q}_T} \Pr\left[f(\bm{R}_{S,q})=Y_q\right].\]
The central problem is to choose $k$ out of $N$ proposers to maximize accuracy, given summarizer $f$: 
\begin{equation}\label{eq:opt_set}
    S^* = \arg\max_{S\subseteq[N], |S|=k} \Acc_f(S).
\end{equation}
We study the trade-offs introduced by the choice of $k$, though for clarity, most of our analysis proceeds while supposing $k$ is fixed.



\subsection{Previous Ideas}
\label{subsec:prior}

\textbf{Label-only aggregation.} The simplest approach aggregates only the discrete answers and ignores textual rationales.
A common choice is (weighted) majority voting over all proposers or a selected subset.
When proposers are conditionally independent with known accuracies, decision theory implies that a weighted majority rule (with weights proportional to log-odds of correctness) is optimal \citep{nitzan1982optimal}.
Our setting departs from these assumptions: LLM proposers exhibit strong dependencies, and their rationales carry additional signal. 
Empirically, aggregation schemes that leverage an LLM summarizer to use rationales outperform simple majority vote on labels alone \citep{lau2024dipper, tekin2024llm}.
Our experiments further confirm this point (see \cref{app:label_agg}).

\textbf{Accuracy-seeking aggregation.} A widely used heuristic in LLM ensembling is to select proposers by their estimated individual accuracy, where the intuition is that proposers with higher accuracy contribute more reliable evidence on new instances.
For example, one idea, called the \emph{self-MoA}, is to sample multiple diverse responses from the single best model and feed them to a summarizer \citep{li2025rethinkingmoa}.

\textbf{Diversity-seeking aggregation.} A parallel line in the LLM ensemble literature argues that accuracy alone is insufficient: ensembles can benefit from diverse views.
This intuition inspired several suggestions that explicitly encourage diversity or strike an accuracy–diversity trade-off (e.g., maximize diversity conditioned on an accuracy bar) \citep{lau2024dipper, tekin2024llm, wang2024mixture}.
However, for LLM ensembles, such diversity-first strategies can be counterproductive: by admitting weak proposers in the name of variety, they often introduce low-quality or correlated errors that depress the final aggregation performance.

\textbf{LLM-as-a-Judge.} Another aggregation paradigm treats an LLM as a judge that evaluates, scores, or filters proposer responses and bases the final answer on the top-rated candidates or their summary. 
Such judge-based approaches can provide effective supervision without ground truth and have shown strong empirical performance \citep{liu2024dynamicllm}. 
However, their effectiveness depends on the reliability and calibration of the judge model, and they may inherit systematic biases when judging responses from closely related LLMs.

\subsection{Limitations of Previous Ideas: A Motivating Counterexample}\label{subsec:counterexample}
Below we provide a counterexample with four proposers and one
Bayesian summarizer, whose signals are $X_1, X_2, X_3, X_4$ and
$Z_f$ respectively. We show that at $k = 2$, accuracy-seeking,
mutual-information-seeking \citep{turkmen2026dontpick}, and diversity-seeking algorithms all
fail to identify the optimal $S^*$ in~\cref{eq:opt_set} for ground truth $Y$.

\begin{proposition}
\label{prop:label-level-insufficient}
There exists a joint distribution over
$(Y, X_1, X_2, X_3, X_4, Z_f)$ and a Bayes-optimal summarizer $f$
such that $\Acc_f(\{1, 2\}) = 1$ and
$\Acc_f(S) \leq 0.9$ for every other
$S \subseteq \{1, 2, 3, 4\}$ with $|S| = 2$. However, none of the
following methods select $\{1, 2\}$:
\begin{enumerate}
    \item Accuracy-first:
    $\arg\max_{|S|=2} \sum_{i \in S} \Pr[X_i = Y]$.
    \item Mutual-information-seeking:
    $\arg\max_{|S|=2} I(Y; X_S)$.
    \item Diversity-seeking:
    $\arg\max_{i\neq j} \Pr[X_i \neq X_j]$.
\end{enumerate}
\end{proposition}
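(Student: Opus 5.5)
We prove \cref{prop:label-level-insufficient} with an explicit XOR-style construction. The idea is that proposers $1$ and $2$ carry no label-level information about $Y$ on their own, but they unlock the summarizer's private signal $Z_f$. Concretely, let $Y$, $U$, $V$ be independent uniform bits, let $N$ be an independent Bernoulli$(0.1)$ noise bit, and set
\[
X_1 = U,\quad X_2 = V,\quad X_3 = Y\oplus N,\quad X_4 = 1 - X_3,\quad Z_f = Y\oplus U\oplus V .
\]
The summarizer $f$ is the Bayes-optimal predictor of $Y$ given $(X_S, Z_f)$. Its accuracy is $\max_y \Pr[Y=y\mid X_S,Z_f]$ averaged over the inputs. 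The validation/test split plays no role here, since everything is computed at the population level.

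\textbf{Accuracies of all pairs.} We compute $\Acc_f(S)$ for each of the six pairs.
\begin{itemize}
\item For $S=\{1,2\}$ the summarizer computes $Y = Z_f\oplus X_1\oplus X_2$, so $\Acc_f(\{1,2\})=1$.
\item Every other pair omits $U$ or $V$. The key observation is that if $V$ (resp.\ $U$) is missing, then $Z_f$ is independent of all remaining variables, because it is XOR-masked by a fresh uniform bit. Hence $Z_f$ is useless, and the summarizer effectively sees only $X_S$.
\item $X_1, X_2$ are independent of $Y$, and $X_4$ is a deterministic function of $X_3$. So for every such $S$ the best achievable accuracy is $\max(0.5,\ \Pr[X_3=Y])$. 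This equals $0.9$ when $S$ contains $3$ or $4$, and $0.5$ for $\{1,2\}$ without $Z_f$ (not a case here).
\end{itemize}
Thus $\Acc_f(S)\le 0.9$ for all $S\ne\{1,2\}$. The one point needing care is the independence claim, which follows since $(U,V)$ are independent uniform bits.

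\textbf{Failure of the three heuristics.}
\begin{enumerate}
\item \emph{Accuracy-first.} Individual accuracies are $0.5, 0.5, 0.9, 0.1$. The pair $\{1,2\}$ scores $1.0$, while $\{1,3\}$ scores $1.4$, so $\{1,2\}$ is not the argmax.
\item \emph{Mutual-information-seeking.} $I(Y;X_1,X_2)=0$ because $(U,V)\perp Y$. Meanwhile $I(Y;X_3,X_4)=I(Y;X_3)=1-h(0.1)>0$, so $\{1,2\}$ is strictly dominated.
\item \emph{Diversity-seeking.} $\Pr[X_3\ne X_4]=1$. Every pair involving $X_1$ or $X_2$ disagrees with probability exactly $1/2$, since $U$ and $V$ are independent uniform bits. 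So the unique maximizer is $\{3,4\}$.
\end{enumerate}

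\textbf{Main obstacle.} The only delicate step is making sure no pair other than $\{1,2\}$ exceeds $0.9$ by exploiting $Z_f$ jointly with the others. The masking argument settles this, since $Z_f$ is a one-time pad of $Y$ under $U\oplus V$. The second subtlety is ensuring each heuristic's argmax excludes $\{1,2\}$ strictly rather than by a tie. That is why $X_4=1-X_3$ is chosen: it gives the diversity criterion a strict winner while adding no information.
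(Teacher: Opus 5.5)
Your proof is correct, but it uses a different construction from the paper's. The paper takes $X_1, Z_f$ independent uniform, lets $X_2$ agree with $X_1$ with probability $0.9$, sets $Y = Z_f\oplus X_1\oplus X_2$, and makes $X_3, X_4$ conditionally independent $0.8$-accurate copies of $Y$. In that construction:
\begin{itemize}
\item the mixed pairs reach $0.9$ because $(Z_f, X_1)$ alone predicts $Y$ through the correlation between $X_1$ and $X_2$, which needs a short posterior computation;
\item $\{3,4\}$ reaches only $0.8$;
\item the heuristics fail because $X_1, X_2$ have accuracy $0.5$, zero mutual information with $Y$, and disagreement only $0.1 < 0.32$.
\end{itemize}

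You instead make $Z_f$ a one-time pad of $Y$ under the independent mask $U\oplus V$. A single masking observation then settles all five remaining pairs at once: each is worth exactly $\Pr[X_3=Y]=0.9$, with no Bayes computation. Your anti-copy $X_4 = 1-X_3$ also makes $\{3,4\}$ the unique diversity maximizer. You rightly noticed that without it, $\{1,2\}$ would tie at disagreement $1/2$ with the mixed pairs. In the paper's construction, the diversity maximizers are actually the four mixed pairs at $1/2$, so only ``not $\{1,2\}$'' holds there.

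The trade-off runs the other way for the first two heuristics. In the paper, accuracy-first and mutual information both select the natural ``strong, independent'' pair $\{3,4\}$ uniquely, and $\{1,2\}$ looks both useless and redundant, matching the proposition's narrative. In yours, $X_4$ is worse than chance, accuracy-first picks $\{1,3\}$ or $\{2,3\}$, mutual information is a five-way tie among pairs containing $3$ or $4$, and $X_1, X_2$ look diverse rather than redundant.

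The statement only asks that $\{1,2\}$ not be selected, so both constructions prove it. Yours is easier to verify; the paper's is the more illustrative example.
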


The construction pits two types of proposer pairs against each other. 
Proposers $X_3$ and $X_4$ are individually informative about $Y$ (each agrees with $Y$ with probability $0.8$) and conditionally independent given $Y$, making them look ideal under standard heuristics. 
Proposers $X_1$ and $X_2$, by contrast, are each marginally independent of $Y$ and highly correlated with one another, so they appear useless and redundant in isolation. 
The catch is that $Y = Z_f \oplus X_1 \oplus X_2$, so $X_1$ and $X_2$ become exactly informative once combined with the summarizer's private signal $Z_f$. 
Accuracy-first and mutual-information-seeking methods both prefer $\{3, 4\}$ because $X_1, X_2$ have zero marginal accuracy and zero mutual information with $Y$; diversity-seeking also prefers $\{3, 4\}$ because $X_1, X_2$ rarely disagree. 
Yet $\{1, 2\}$ is the unique size-$2$ set that lets the Bayes-optimal summarizer recover $Y$ perfectly. 

Although this construction is admittedly artificial, the message it conveys is clear and general: any selection rule that evaluates proposers without reference to $Z_f$ (the information of the summarizer) can miss the complementarity that makes a selection optimal.
Our empirical results in~\cref{fig:single_proposer} confirm this, motivating our framework of selecting proposers based on their {complementarity} with the summarizer beyond their individual quality or pairwise diversity.


\section{Methods}
\label{sec:method}

Our central idea is to select proposers based on their collaborative performance with each other and with the summarizer\textemdash the selected proposers should complement their teammates.
In principle, one could exhaustively evaluate all size-$k$ teams and pick the subset that maximizes summarizer accuracy. 
In practice, searching over all $\binom{N}{k}$ subsets is typically infeasible\textemdash e.g., even with $N=20$ and $k=5$ there are $15{,}504$ candidate teams\textemdash especially given the high inference cost of summarizing multi-rationale inputs.

An immediate idea is a \textbf{greedy algorithm}: we can iteratively find the proposer with the largest marginal contribution to the summarizer accuracy until we find $k$ proposers.
In particular, we initialize $S_0=\emptyset$, and for $t=1,\ldots, k$, choose
\[i_t\in \arg\max_{i\in [N]\backslash S_{t-1}}\left[\Acc(S_{t-1}\cup\{i\}) - \Acc(S_{t-1})\right],\]
then update $S_t = S_{t-1}\cup \{i_t\}$.

The performance of the greedy algorithm depends on the submodularity of the accuracy function, which in turn depends on the summarizer. 
It turns out that for LLM summarizers, the accuracy function is not even monotone (and thus not submodular)\textemdash including a low-accuracy proposer in the pool can actually reduce overall summarization performance.
This observation is supported by prior work \citep{li2025rethinkingmoa} and our experiments in \cref{app:addition_results}.
Therefore, in principle, the greedy algorithm can be far from the optimum in the worst case.
However, as we will see, the empirical performance of the (simplified versions of) the greedy algorithm is generally robust and significantly outperforms the baselines.

Although the greedy algorithm is conceptually simple, it can be computationally demanding: it requires evaluating the accuracy function $O(kN)$ times, which entails $O(kNm)$ calls to the summarizer.
It is thus important to explore the trade-off between ensemble accuracy and efficiency via some heuristic variants.
In our experiments, we implement only the simplified methods rather than the full greedy algorithm.

\subsection{Model-First Greedy}

Recall that a model and an instruction prompt determine a proposer.
However, responses generated by different prompts of the same model are typically more similar than responses generated by different models under the same prompt (see \cref{app:label_agg}).
Inspired by hierarchical feature selection \citep{ristoski2014feature}, we introduce a simplification called \emph{model-first greedy}.
Unlike standard greedy\textemdash which estimates the marginal gain of every proposer using all $m$ questions at each iteration \textemdash model-first greedy scores all $\nprompt$ proposers from the same model using the common set of $m$ questions, then chooses a proposer within the best model. 
Concretely, in iteration $t$:
\begin{enumerate}[leftmargin=2em, labelsep=0.3em]
    \item Partition $\mathcal{Q}_T$ into a training set $\mathcal{Q}_T^{tr}$ and a validation set $\mathcal{Q}_T^{val}$ for cross validation.
    \item For each model $i \in [\nllm]$, randomly assign each question $j \in \mathcal{Q}_T^{tr}$ to one of the proposers associated with model $i$, and estimate the accuracy of model $i$ by averaging over the questions in the training set.
    \item Select the model with the highest estimated accuracy, and within that model, pick the proposer with the highest estimated accuracy from the previous step.
\end{enumerate}
Intuitively, the procedure prioritizes model selection while allowing more randomness in prompt selection.
This reduces summarizer calls per iteration from $N\cdot m$ to $\nllm\cdot m$.

\subsection{Label-level Complementarity}
\label{subsubsec:label_level}

Model-first greedy estimates each proposer’s marginal contribution via direct calls to the summarizer.
However, the proposers’ labels themselves carry predictive signals: the summarizer is more likely to answer correctly when it receives more correct inputs.
This motivates the idea of selecting proposers based on their \emph{label-level} information, which can improve scalability by avoiding extensive calls to the summarizer oracle.
This idea is related to \emph{filter-based} feature selection methods, e.g.~\citep{peng2005feature, urbanowicz2018relief}, which remove likely weak features without retraining the predictor based on correlations between features.
In particular, we use an alternative set function $\widehat{Acc}$, defined with respect to a label-based summarizer $g$, and use it to guide proposer selection. This yields the following two methods.

\paragraph{Truth-Prediction Greedy}
Built on the intuition that labels from a set of complementary proposers can predict the true label more accurately, we can train a light-weight machine learning model to predict $Y_q$, and use it to select informative proposers.
Given a set of proposers $S$ and a family of models parametrized by $\theta\in \Theta$, we compute a value $\widehat{\Acc}_{g_\theta}(S)$ using the following procedure:
\begin{enumerate}[leftmargin=2em, labelsep=0.3em]
    \item Partition $\mathcal{Q}_T$ into a training set $\mathcal{Q}_T^{tr}$ and a validation set $\mathcal{Q}_T^{val}$ for cross validation.
    \item \textbf{Fit $g_\theta$.} Use the data in the training set, $((X_{i,q})_{i\in S}, Y_q)_{q\in \mathcal{Q}_T^{tr}}$ to fit a model $g_\theta$ that maps $|S|$ labels on a question to a (hard) prediction of the ground truth label. Here, proposers' generated labels are viewed as features.
    \item \textbf{Score proposer set $S$.} On the validation set, evaluate the accuracy of $g_\theta$ using responses from $S$ and return $\widehat{\Acc}_{g_\theta}(S)$.
\end{enumerate}

Next, we select proposers using a variant of the greedy algorithm, called the $k$-greedy (Alg.~\ref{alg:k-greedy}), using $\Acc_{g_\theta}$ as the set function.
We first initialize a set of proposers $S_0 = \emptyset$.
Then, in round $t\in \{1,\ldots,k\}$, unlike standard greedy\textemdash which estimates a candidate’s marginal gain relative to the current selected set $S_{t-1}$\textemdash $k$-greedy's estimation always conditions on a set of $k$ proposers.
The intuition is that LLM summarizers are non-monotone, so an element that looks promising early can hurt performance at the final team size $k$.
Concretely, given $S_{t-1}$, we randomly select $k-t+1$ proposers, to form a team of size $k$, denoted as $L$.
Then, we measure candidate $i$'s contribution ($i\notin S_{t-1}$) as the accuracy difference with and without $i$ (replacing one randomly chosen proposer in $L$). 
Averaging this difference over several random completions yields a more faithful estimate
of $i$'s value at the final team size.
We refer to this method as \emph{truth-prediction greedy}, which applies the $k$-greedy algorithm to the set function $\widehat{\Acc}_{g_\theta}$.
We emphasize that truth-prediction greedy relies on a lightweight ML model to guide proposer selection, but the final ensemble is still formed by feeding the chosen proposers into the LLM summarizer.

\begin{algorithm}[ht]
\caption{$k$-Greedy Proposer Selection w.r.t.\ \(\Acc\)}
\label{alg:k-greedy}
\KwIn{ground set \([N]\), target size \(k\), set function \(\Acc\), repetitions \(M\)}
\KwOut{selected set \(S_k\)}
\(S_0 = \emptyset\) \tcp*{initial selected proposers}
\For{\(t = 1\) \KwTo \(k\)}{
  \For{\(i \in [N] \setminus S_{t-1}\)}{
    Set $\Delta_i = 0$\;
    
    \For{\(\tau = 1\) \KwTo \(M\)}{
      Sample \(L \subseteq [N] \setminus (S_{t-1} \cup \{i\})\) uniformly with \(|L| = k - |S_{t-1}|\)\tcp*{random completions}
      Sample \(j \in L\) uniformly and set \(L' \leftarrow (L \setminus \{j\}) \cup \{i\}\)\;
      \(\Delta_i \mathrel{+}=  \Acc(S_{t-1} \cup L') - \Acc(S_{t-1} \cup L)\)\;
    }
    \(\widehat{\Delta}_i(S_{t-1}) = \Delta_i / M\) \tcp*{estimated marginal}
  }
  Choose \(i^\star \in \arg\max_{i \in [N]\setminus S_{t-1}} \widehat{\Delta}_i(S_{t-1})\)\;
  \(S_t = S_{t-1} \cup \{i^\star\}\)\;
}
\Return \(S_k\)\;
\end{algorithm}

\paragraph{Oracle-Surrogate Greedy}
Proposer selection under truth-prediction greedy does not depend on the summarizer, so it may diverge from the ensemble’s true test performance.
As an alternative approach, we propose \emph{oracle-surrogate greedy}, where the idea is to fit a simple surrogate model to simulate the summarizer’s behavior using a small number of oracle queries on the training set, then use the surrogate to score and select proposers.
Although this method requires some summarizer calls for training, the surrogate model is kept simple as we only focus on label-level information, making it more sample-efficient than model-first greedy in practice.

We consider a surrogate model $\tilde{g}$ based on the assumption that the summarizer’s accuracy depends primarily on how many of the $k$ input labels are correct.
Specifically, $\tilde{g}: \{0,\ldots,k\}\rightarrow [0,1]$ maps a count $c$ of correct labels to the expected summarizer accuracy when exactly $c$ out of $k$ inputs are correct.
This implies that our surrogate model greatly reduces the query complexity by not distinguishing the proposer ID.
Given a set of proposers $S$, the following procedure returns a value $\widehat{\Acc}_{\tilde{g}}(S)$ for set $S$:
\begin{enumerate}[leftmargin=2em, labelsep=0.3em]
    \item Partition $\mathcal{Q}_T$ into a training set $\mathcal{Q}_T^{tr}$ and a validation set $\mathcal{Q}_T^{val}$ for cross validation.
    \item \textbf{Fit $\tilde{g}$.} For each $c \in \{0,\ldots,k\}$, repeat $T_{\tilde g}$ times: (i) sample a question from $\mathcal{Q}_T^{\mathrm{tr}}$ and a size-$k$ set of proposers whose responses contain exactly $c$ correct labels; (ii) query the summarizer on these $k$ responses. Define $\tilde{g}(c)$ as the empirical accuracy—i.e., the average correctness of the summarizer across the $T_{\tilde g}$ queries.
    \item \textbf{Score proposer set $S$.} For each $q\in \mathcal{Q}_T^{val}$, compute $c_q(S)$, the number of correct labels in $S$, and assign $\widehat{\Acc}_{\tilde{g}}(S) = \frac{1}{|\mathcal{Q}_T^{val}|} \sum_{q\in \mathcal{Q}_T^{val}} \tilde{g}(c_q(S))$.
\end{enumerate}

Next, we select a set of $k$ agents by calling Alg. \ref{alg:k-greedy} with $\widehat{\Acc}_{\tilde{g}}(S)$ as the set function.


\section{Experiments}

In this section, we first introduce the experimental setups, then we validate the proposed complementary-MoA framework, diagnose the failure modes of baseline selectors, quantify efficiency–accuracy trade-offs, and finally study prompting strategies for the summarizer that yield stronger ensembles.

\subsection{Experiment Setups}

\paragraph{Dataset}

We consider benchmark datasets with multi-choice reasoning questions.
We use three popular reasoning datasets: \textbf{AIME} \citep{dolbokostya_aime_imo_2025}, CLadder \citep{jin2023cladder}, and MMLU-Pro \citep{wang2024mmlu}.


\textbf{AIME} consists of about $1{,}600$ mathematical problems from competitions such as AIME and IMO. The original answers are open-ended integers; we convert each problem into a five-choice question by randomly sampling four incorrect integers between $0$ and $1{,}000$.
\textbf{CLadder} contains 10k causal reasoning questions that translate causal-graph queries into natural-language yes/no questions, spanning association, intervention, and counterfactual reasoning.
\textbf{MMLU-Pro} is a large-scale multi-choice benchmark targeting expert-level reasoning across diverse academic domains, with harder questions and reduced answer leakage compared to earlier MMLU versions.

\paragraph{Models}
We consider a diverse set of LLMs: QwQ-32B \citep{qwen2025qwq32b}, Qwen3-32B \citep{qwen3technicalreport}, Sky-T1-32B-Preview \citep{sky_t1_2025}, aya-expanse-32B \citep{dang2024ayaexpanse}, Gemini1.5-Pro \citep{gemini2024g1515pro}, Llama-3.3-70B-Instruct \citep{meta2024llama3}, AceReason-Nemotron \citep{chen2025acereason}, and GPT-4o \citep{openai2024gpt4ocard}.
All models are run with a default temperature of 0.7. To reduce inference cost and latency, we disable chain-of-thought prompting.
For each of the LLMs, we consider $\nprompt = 5$ different prompts which are presented in \cref{app:prompt}, and each model-prompt pair is viewed as a proposer.

\paragraph{Settings}
We evaluate ensemble performance across four factors\textemdash dataset, proposer pool, summarizer, and ensemble size $k$\textemdash using three datasets (CLadder, AIME, MMLU-Pro), two pools (with/without the strongest model), multiple summarizers, and several choices of $k\le 5$.
We use \emph{complete pool} to refer to settings with all tested LLMs, and \emph{reduced pool} to refer to settings where the best-performing LLM is removed from the pool. We use the latter to simulate a setting with mixed-performing LLMs so as to improve the robustness of our results.
For example, ``(AIME, complete pool, Aya, $k=3$)'' denotes the AIME dataset, a pool including the strongest LLM, Aya as summarizer, and selecting three proposers.
We limit $k$ to $5$ because larger ensembles show diminishing returns \citep{lau2024dipper}, while the cost of searching for the optimal team grows quickly.

\paragraph{Baselines}
Based on previous ideas in \cref{subsec:prior}, we consider the following baselines:
\begin{itemize}[leftmargin=2em, labelsep=0.5em]
\setlength{\itemsep}{0pt}
    \setlength{\parskip}{0pt}
    \setlength{\topsep}{0pt}
    \item \textbf{Input-all}: input all $N$ proposers.\footnote{To fit the token limit, we truncate the responses from each proposer before summarizing.}
    \item \textbf{Best-model}: identify the most accurate model and select all proposers associated with it, in line with \citep{li2025rethinkingmoa}.
    \item \textbf{Top-accuracy}: select the most accurate $k$ proposers overall.
    \item \textbf{MoA (per-model top-$1$)}: for each model, select the single most accurate proposer, inspired by the original mixture-of-agents framework \citep{wang2024mixture}.
    \item \textbf{Conditioned-diversity}: start with the most accurate proposer, then greedily add the proposer that maximizes average disagreement with the selected set, subject to an accuracy threshold~$\tau=0.4$, inspired by \citep{lau2024dipper}.
    \item \textbf{LLM-judge}: have a judge LLM (Aya or GPT-5.2) rate each proposer's responses on a 1--5 scale, and select the top-$k$ proposers by the average grade across the questions in training set. Cross validation is applied in the same manner as our methods descried in \cref{sec:method}.
    \item \textbf{Approximate Shapley}: estimate each proposer's Shapley value, which is its average marginal contribution to summarizer accuracy across coalitions of other proposers. In particular, we sample $40$ random subsets of sizes $1$--$4$ ($10$ per size) and $20$ random questions per subset, then averaging the accuracy gain from adding the proposer; select the top-$k$ proposers by the estimated Shapley value.

\end{itemize}

\subsection{A Comparison of Proposer Selection Methods}
\label{subsec:comparison}

We randomly select $m=400$ questions for proposer selection and use the remaining questions for accuracy computing. 
For each LLM, we iteratively use $\nprompt=5$ prompts to solicit responses for all the sampled questions, which returns $N=40$ proposers' responses for each question.
For each question, we randomize proposer order and include their individual accuracies in the instructions while inputting into the summarizer.
To further reduce the variance of the ensemble accuracy (due to the randomness caused by the default temperature of LLMs), we repeatedly call the summarizer ten times for each question in the test set and take the average.


\Cref{tab:all-settings-accuracy} presents a summary of accuracy results across six settings with $k=5$. 
We tested two summarizers for each dataset while the better-performed one is presented in the figure: Aya for AIME, and AceReason for Cladder and MMLUPro.
For the LLM-judge baseline, we report results under two judges of contrasting strength: Aya, a relatively weaker judge, and GPT-5.2, a strong judge. 
Detailed tables reporting the per-method composition of selected proposers for each setting are deferred to \Cref{app:additional}.

\begin{table*}[t]
\centering
\caption{Accuracy comparison across all settings ($k=5$). Per column, the best result is in \textbf{\textcolor{blue}{bold blue}}, the second-best is in \textcolor{blue}{blue}, and the worst is in \textcolor{red}{red}.}
\label{tab:all-settings-accuracy}
\begin{tabular}{lcccccc}
\toprule
& \multicolumn{2}{c}{AIME (Aya)} & \multicolumn{2}{c}{Cladder (AceReason)} & \multicolumn{2}{c}{MMLU-Pro (AceReason)} \\
\cmidrule(lr){2-3} \cmidrule(lr){4-5} \cmidrule(lr){6-7}
Method & Complete & Reduced & Complete & Reduced & Complete & Reduced \\
\midrule
Input-all              & \textbf{\textcolor{blue}{0.658}} & \textcolor{blue}{0.621}          & \textcolor{blue}{0.801}          & 0.790                            & 0.724                            & 0.687                            \\
Best-model             & \textcolor{red}{0.349}           & \textcolor{red}{0.332}           & 0.786                            & \textcolor{blue}{0.793}          & 0.722                            & 0.664                            \\
Top-accuracy           & 0.377                            & 0.364                            & 0.777                            & 0.780                            & 0.746                            & 0.666                            \\
MoA                    & 0.580                            & 0.562                            & 0.757                            & 0.760                            & 0.737                            & 0.687                            \\
Conditioned-diversity  & 0.483                            & 0.468                            & 0.742                            & 0.765                            & 0.683                            & 0.669                            \\
Aya-judge              & 0.435                            & 0.415                            & \textcolor{red}{0.720}           & \textcolor{red}{0.748}           & 0.710                            & \textcolor{blue}{0.692}          \\
GPT5.2-judge           & 0.448                            & 0.450                            & 0.760                            & 0.763                            & \textcolor{red}{0.565}           & \textcolor{red}{0.568}           \\
\midrule
Truth-prediction Greedy   & 0.611                            & 0.522                            & 0.762                            & 0.761                            & \textcolor{blue}{0.755}          & 0.678                            \\
Oracle-surrogate Greedy   & 0.607                            & 0.497                            & 0.752                            & 0.765                            & \textbf{\textcolor{blue}{0.765}} & 0.687                            \\
Model-first Greedy        & \textcolor{blue}{0.654}          & \textbf{\textcolor{blue}{0.632}} & \textbf{\textcolor{blue}{0.812}} & \textbf{\textcolor{blue}{0.802}} & 0.738                            & \textbf{\textcolor{blue}{0.711}} \\
\bottomrule
\end{tabular}
\end{table*}

\paragraph{Importance of Complementarity}

First, our results show that accuracy-seeking and diversity-seeking baselines each fail in complementary ways. Accuracy-seeking methods (Top-accuracy, Best-model) perform poorly on AIME even when the pool contains a clearly strong proposer (Gemini1.5-pro). Consistent with \cref{fig:single_proposer}, this indicates that the most accurate proposer is not necessarily the most complementary to the summarizer. Diversity-seeking methods (Conditioned-diversity) instead fail on MMLU-Pro, where Sky alone composes well with the summarizer and adding diverse but lower-accuracy proposers degrades aggregation. In both cases, the failure stems from optimizing a proxy\textemdash individual accuracy or pairwise diversity\textemdash that does not capture how proposers interact with the summarizer.

In contrast, the label-level complementarity-aware methods (Truth-prediction Greedy and Oracle-surrogate Greedy), while not always the top performers, deliver consistently strong performance across all settings. Model-first Greedy goes further: it ranks among the top two methods in five of the six settings in \Cref{tab:all-settings-accuracy}. Together, these findings indicate that explicitly accounting for complementarity is crucial for effective LLM ensembles.

The LLM-judge baseline performs poorly across all datasets. This is consistent with our main message: the judge evaluates each proposer in isolation rather than as a team interacting with the summarizer, and its grades can additionally inherit biases from the judge model itself. Notably, using a stronger judge does not necessarily yield a stronger ensemble\textemdash GPT5.2-judge is significantly outperformed by the weaker Aya-judge\textemdash reinforcing that individual-quality scoring is an unreliable proxy for ensemble value. 

The approximate Shapley baseline is substantially more expensive than the other baselines, as it requires repeatedly sampling proposer subsets and isolating each proposer's marginal contribution. We therefore evaluate it only in the (AIME, $\cdot$, Aya, $k=5$) setting for reference. As shown in \cref{tab:comparison_aime_dominant_aya_k5,tab:comparison_aime_balanced_aya_k5}, its accuracy is consistently dominated by Model-first Greedy, indicating that Model-first Greedy offers a better trade-off between accuracy and query complexity.

\emph{What explains this performance difference?}
\Cref{fig:distribution_noQwQ} presents the empirical distributions of the number of correct labels $c \in \{0,\ldots,k\}$ obtained by the selected proposers under three representative methods on the AIME dataset with reduced pool.
The bar corresponding to $c$ indicates the fraction of questions in the dataset where exact $c$ selected proposers have answered correctly.
The overlaid curve shows the summarizer accuracy conditioned on $c$ out of $k$ input responses being correct.\footnote{To reduce variance, we pool samples from all proposers to estimate the conditioned accuracy. Hence, the curve is identical across methods within the same setting.}

Clear patterns emerge: accuracy-seeking baselines, such as Best-model, induce a U-shaped distribution of $c$, while diversity-seeking baselines, such as Conditioned-diversity, exhibit a bell-shaped distribution. 
This indicates that Best-model tends to select proposers who make similar mistakes, which can be problematic when the summarizer accuracy curve is concave\textemdash i.e., when the marginal benefit of additional correct answers diminishes.
However, Conditioned-diversity concentrates mass around $c=\lfloor k/2 \rfloor$ by seeking different proposers, which can be suboptimal when the summarizer requires a strong majority to achieve a significant accuracy boost.
In contrast, complementarity-based methods yield distributions that lie between these two extremes, illustrating their robustness across different summarizer behaviors.
This observation echoes our theoretical insights present in \cref{subsec:counterexample}.

\begin{figure*}[ht]
  \centering
  \subfloat[Best-model]{
    \includegraphics[width=0.32\textwidth]{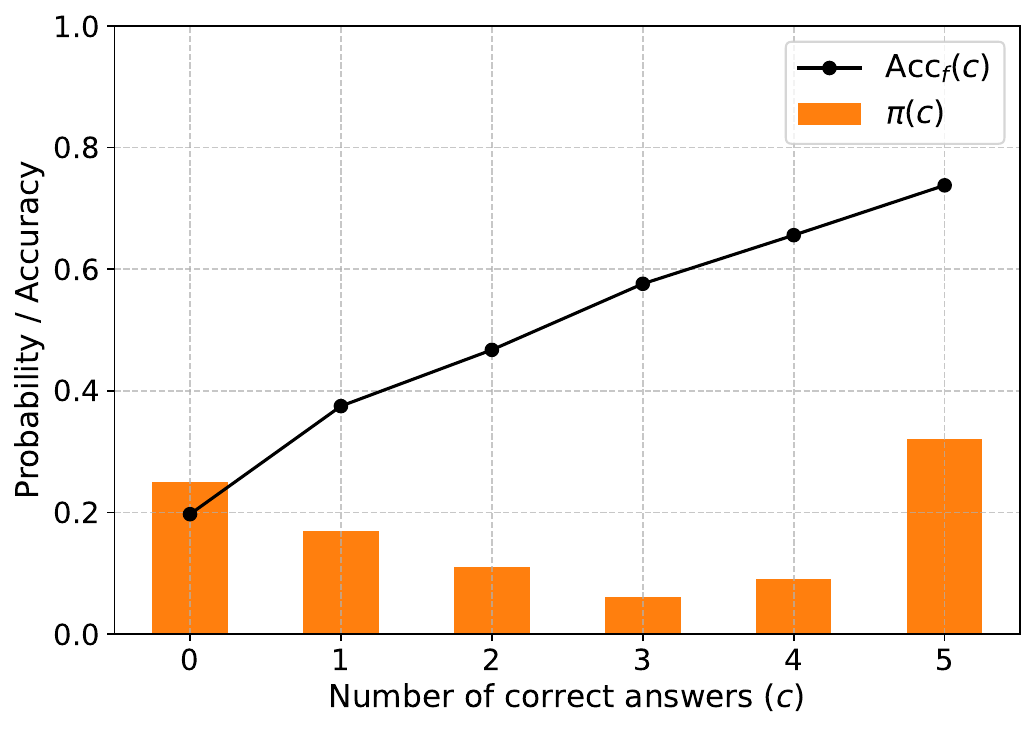}
  }
  \subfloat[Conditioned-diversity]{
    \includegraphics[width=0.32\textwidth]{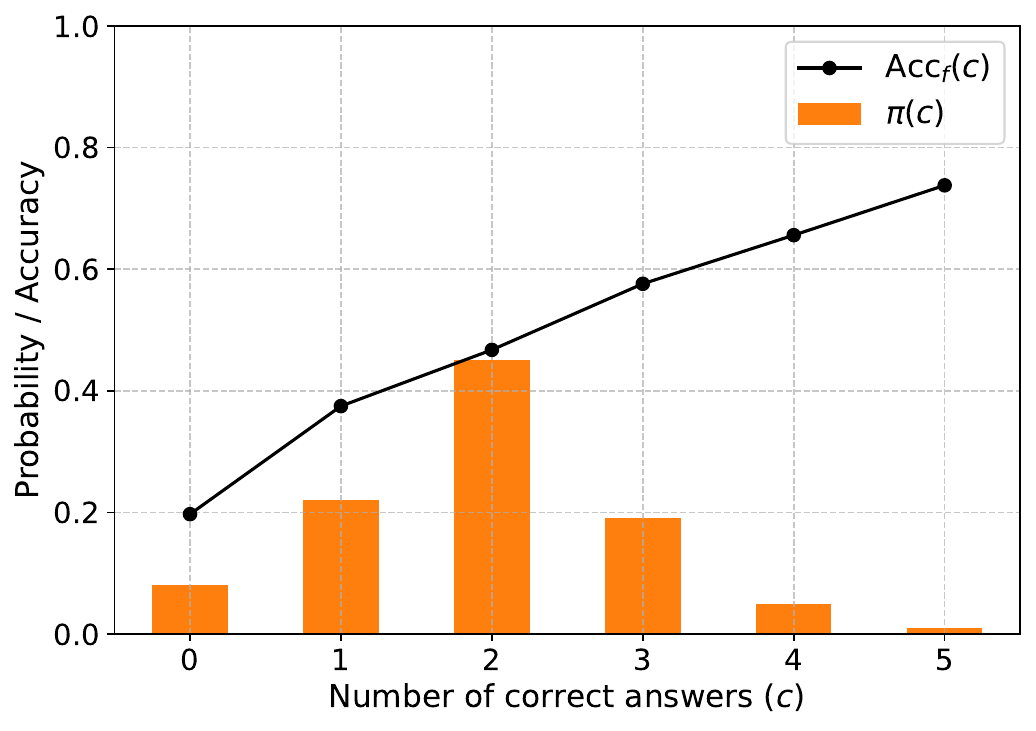}
  }
  \subfloat[Model-first Greedy]{
    \includegraphics[width=0.32\textwidth]{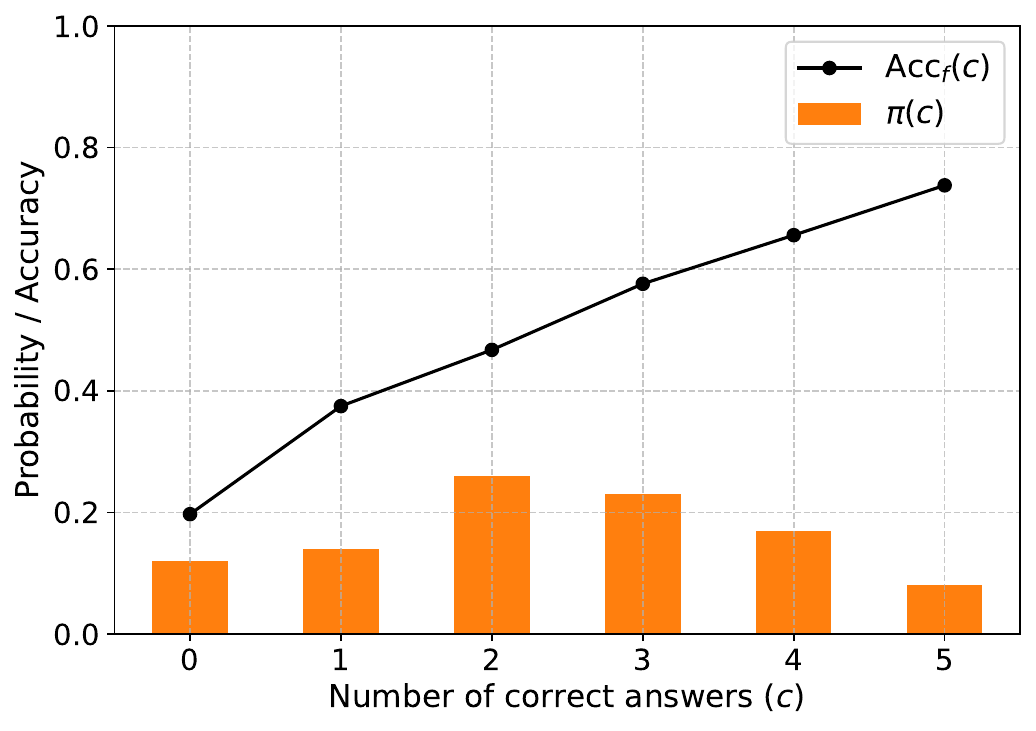}
  }
  \caption{Distribution of the number of correct answers (bars) and summarizer accuracy $\mathrm{Acc}_{f}(c)$ (line) for three exemplary methods in the (AIME, reduced pool, Aya, $k=5$) setting.}
  \label{fig:distribution_noQwQ}
\end{figure*}


\paragraph{Efficiency-Accuracy Tradeoff}

\begin{table}
\centering
\caption{Query complexity of considered methods, measured as the number of summarizer queries during proposer-selection.}
\label{tab:sample-complexity}
\small
\setlength{\tabcolsep}{6pt}
\renewcommand{\arraystretch}{1.15}
\begin{minipage}[t]{0.48\textwidth}
\centering
\begin{tabular}{l c}
\toprule
Method & Complexity \\
\midrule
Approximate Shapley       & $2\,\nllm\,\nprompt\,z\,T_{h}$ \\
Truth-Prediction Greedy   & $0$ \\
Oracle-Surrogate Greedy   & $(k+1)\,T_{\tilde{g}}$ \\
Model-First Greedy        & $\nllm\,m\,k$ \\
Other methods             & $0$ \\
\bottomrule
\end{tabular}
\end{minipage}%
\hfill
\begin{minipage}[t]{0.48\textwidth}
\centering
\begin{tabular}{c l}
\toprule
Symbol & Meaning \\
\midrule
$\nllm$        & number of candidate LLMs \\
$\nprompt$     & number of prompts per LLM \\
$k$            & number of selected proposers \\
$m$            & size of training set \\
$z, T_{h}, T_{\tilde{g}}$            & method-specific Monte Carlo sample size \\
\bottomrule
\end{tabular}
\end{minipage}
\end{table}

We quantify each method's efficiency by the \textbf{number of summarizer calls} made during proposer selection (i.e., the query complexity). Calls used by individual proposers to generate responses are excluded, as they are identical across methods and do not affect relative efficiency. We summarize each method's complexity below, with formulas shown in Table~\ref{tab:sample-complexity}.

\begin{itemize}[leftmargin=2em, labelsep=0.5em]
\setlength{\itemsep}{2pt}
    \item \textbf{LLM-judge baselines.} These methods issue no summarizer calls but require $m$ calls to a judge LLM to score proposer responses on the training set. We treat judge calls as substantially cheaper, since each produces only short numerical ratings, whereas a summarizer call must reason over a long, multi-rationale context and emit a full answer.
    \item \textbf{Oracle-Surrogate Greedy.} Fits the summarizer's accuracy by drawing $T_{\tilde{g}}$ Monte Carlo samples for each of the $(k+1)$ possible correct-count cases (from $0$ to $k$), resulting in $(k+1)\,T_{\tilde{g}}$ calls; with $T_{\tilde{g}}=200$ and $k=5$, this is $1{,}200$ calls.
    \item \textbf{Model-First Greedy.} Evaluates each of the $\nllm$ models on $m$ training questions in each of $k$ rounds, giving $\nllm \cdot m \cdot k$ calls. In our experiments, this is $8 \cdot 400 \cdot 5 = 16{,}000$ calls.
    \item \textbf{Approximate Shapley.} Estimates each proposer's Shapley value by random coalition sampling. For each of the $N = \nllm \cdot \nprompt$ proposers $i$, we sample $z = 10(k-1)$ coalitions of other proposers (10 per coalition size, for sizes $1$ to $k-1$), and for each sampled coalition $S$ we query the summarizer on $T_h = 20$ training questions twice\textemdash once on $S$ and once on $S \cup \{i\}$. The total is $2 \cdot N \cdot z \cdot T_h$ calls; with $k=5$, this is $2 \cdot 40 \cdot 40 \cdot 20 = 64{,}000$ calls.
     \item \textbf{Other methods.} Other baselines and Truth-Prediction Greedy rely on proposers' individual reported labels and thus incur zero LLM calls during proposer selection.
\end{itemize}





\subsection{Prompting Summarizer}
\label{subsec:summarizer}

Beyond which proposers are selected, how their responses are presented to the summarizer also shapes aggregation quality. We examine two prompting choices: (i) the ordering of proposer responses and (ii) whether each proposer's individual accuracy is disclosed to the summarizer.

We pick five proposers with relatively large accuracy differences, and input their responses to the summarizer in ascending, descending, or randomized order of individual accuracy.
For each case, we further distinguish two settings depending on whether the accuracy of each proposer is input to the summarizer as a part of the prompt.

Table \ref{tab:ordering-accuracy} presents an example with two key takeaways.
First, \textbf{inputting accuracy matters.} 
Providing per-proposer accuracies affects performance in opposite ways across datasets\textemdash improving on AIME and MMLU-Pro yet degrading on CLadder.
This suggests that the LLM summarizer can respond to the ``reliability'' information, but the net effect is heavily context-dependent.
When accuracy information helps, the gain is largest under random ordering — consistent with the summarizer relying more on explicit reliability cues when no positional signal is available.

Second, \textbf{ordering matters.}
Placing stronger proposers later in the prompt (using ascending accuracy order) outperforms descending order.
This pattern is consistent with recency bias in long-context LLM inference \citep{alex2023attention}: earlier content tends to receive less attention relative to later content.
These findings help clarify why our main experiments adopted a randomized ordering with per-proposer accuracies.

\begin{table*}[ht]
\centering
\caption{Summarizer accuracies under different proposer orderings and whether individual accuracies are input in the ($\cdot$, $\cdot$, AceReason, $k=5$) setting with proposers: QwQ, Gemini, Llama, GPT, Aya, under instruction prompt 1 (\cref{app:prompt}).}
\label{tab:ordering-accuracy}
\setlength{\tabcolsep}{8pt}
\renewcommand{\arraystretch}{1.2}
\begin{tabular}{lcccccc}
\toprule
& \multicolumn{2}{c}{AIME}
& \multicolumn{2}{c}{CLadder}
& \multicolumn{2}{c}{MMLU-Pro} \\
\cmidrule(lr){2-3}
\cmidrule(lr){4-5}
\cmidrule(lr){6-7}
Ordering
& no acc & with acc
& no acc & with acc
& no acc & with acc \\
\midrule
Ascending  & 0.524 & 0.526 & 0.806 & 0.773 & 0.406 & 0.449 \\
Descending & 0.500 & 0.496 & 0.792 & 0.759 & 0.385 & 0.449 \\
Randomized & 0.498 & 0.538 & 0.798 & 0.774 & 0.401 & 0.448 \\
\bottomrule
\end{tabular}
\end{table*}

\section{Conclusion and Discussion}


We propose \textbf{complementary-MoA}, a proposer-selection framework for post-inference LLM ensembles built around an observation that prior work has largely missed: an optimal ensemble depends not only on proposers' individual accuracy or mutual diversity, but on how well they complement both each other and the summarizer. Existing heuristics\textemdash whether based on individual accuracy or pairwise diversity\textemdash evaluate proposers in isolation from the summarizer and therefore overlook this structural factor in aggregation. We make the gap precise both theoretically and empirically: \cref{prop:label-level-insufficient} constructs an example in which accuracy-, mutual-information-, and diversity-seeking selectors all simultaneously fail to recover the optimal proposer set, under a Bayes-optimal summarizer, and our experiments confirm that this is not a contrived artifact but a recurring failure mode in practice.

Building on this insight, we connect proposer selection to feature selection over a black-box objective and instantiate three algorithms—model-first, truth-prediction, and oracle-surrogate greedy—that span a spectrum of accuracy–efficiency trade-offs. Across three reasoning benchmarks, multiple summarizers, and varying proposer pools, our methods are consistently robust, clarify when and why standard baselines fail, and motivate prompting strategies that further strengthen multi-LLM collaboration.

We acknowledge several limitations and directions for future work. First, our label-level algorithms target multiple-choice tasks; extending them to open-ended generation requires a reliable evaluation metric that captures partial correctness. Second, the efficiency–accuracy frontier is not yet fully charted, where hybrid designs (e.g., selecting the first $k' < k$ proposers via complementarity and filling the remainder by accuracy) may further reduce query complexity.

\section*{Acknowledgements} Lu and Gao would like to acknowledge NSF support through IIS-2229876, DMS-2220271, CNS-2515159, DMS-2311064, and CCF-2118953. Xia acknowledges NSF \#2450124, \#2517733, and \#2518373 for support.





\bibliography{main}
\bibliographystyle{icml2026}

\appendix

\section{Proofs}

\begin{proof}[Proof of \cref{prop:label-level-insufficient}]
We consider binary signals in $\{0, 1\}$. Let $X_1$ and $Z_f$ be
independent and uniform on $\{0, 1\}$, and set $X_2$ so that
$\Pr[X_2 = X_1] = 0.9$. Define the ground truth as
$Y = Z_f \oplus X_1 \oplus X_2$. Let $X_3, X_4$ be conditionally
independent given $Y$ with $\Pr[X_3 = Y] = \Pr[X_4 = Y] = 0.8$.
The summarizer privately observes $Z_f$, and proposer $i$ reports
$X_i$ for $i = 1, \dots, 4$.

\paragraph{Optimality of $\{1, 2\}$.} Because $Y$ is a deterministic
function of $(Z_f, X_1, X_2)$, the Bayes-optimal summarizer
recovers $Y$ exactly from $(Z_f, X_1, X_2)$, so
$\mathrm{Acc}_f(\{1, 2\}) = 1$. Every other size-$2$ set contains
at most one of $\{1, 2\}$ and hence cannot recover $Y$ exactly.

For $S = \{3, 4\}$, since $Z_f$ alone is independent of $Y$, it
provides no information without $X_1$ or $X_2$. The Bayes-optimal
$f$ therefore aggregates $X_3, X_4$ alone, agreeing on the correct
value with probability $0.64$ and tie-breaking on disagreement,
for total accuracy
\[
0.64 \cdot 1 + 0.32 \cdot \tfrac{1}{2} = 0.8.
\]

For $|S \cap \{1, 2\}| = |S \cap \{3, 4\}| = 1$, without loss of
generality, take $S = \{1, 3\}$. By symmetry it suffices to
compute the conditional accuracy given $Z_f = X_1 = 0$, and the other
three combinations are identical. Under this conditioning,
$Y = X_2$, so $\Pr[Y = 0 \mid Z_f = 0, X_1 = 0] = 0.9$. The
summarizer also observes $X_3$ with $\Pr[X_3 = Y \mid Y] = 0.8$.
By Bayes' rule,
\[
\Pr[X_3 = 0 \mid Z_f = 0, X_1 = 0]
= 0.9 \cdot 0.8 + 0.1 \cdot 0.2 = 0.74,
\]
and the posteriors on $Y$ are
\begin{align*}
\Pr[Y = 0 \mid Z_f = 0, X_1 = 0, X_3 = 0]
&= \frac{0.9 \cdot 0.8}{0.74} = \frac{36}{37}, \\
\Pr[Y = 0 \mid Z_f = 0, X_1 = 0, X_3 = 1]
&= \frac{0.9 \cdot 0.2}{0.26} = \frac{9}{13}.
\end{align*}
The Bayes-optimal outputs $0$ regardless of $X_3$, and the
conditional accuracy given $(Z_f = 0, X_1 = 0)$ is
$\Pr[Y = 0 \mid Z_f = 0, X_1 = 0] = 0.9$. Hence
$\Acc_f(\{1, 3\}) = 0.9$, and by symmetry the same holds
for $\{1, 4\}, \{2, 3\}, \{2, 4\}$.

\paragraph{Failure of the three methods.} Since $Y$ is uniform
conditional on $(X_1, X_2)$, we have
$\Pr[X_1 = Y] = \Pr[X_2 = Y] = 0.5$ and $I(Y; X_1, X_2) = 0$. By
contrast, $\Pr[X_3 = Y] = \Pr[X_4 = Y] = 0.8$ and a direct
calculation gives
\begin{align*}
I(Y; X_3, X_4) &= H(Y) - H(Y \mid X_3, X_4) \\
&= 1 - \left(0.68 \cdot H\left(\tfrac{64}{68}\right)
+ 0.32 \cdot H\left(\tfrac{1}{2}\right)\right) > 0,
\end{align*}
where $H(\cdot)$ denotes binary entropy and the two terms
correspond to the agreement and disagreement events for
$(X_3, X_4)$. Hence both accuracy-first and mutual-information
greedy select $\{3, 4\}$.

For diversity-seeking, the pair $\{1, 2\}$ has disagreement
$\Pr[X_1 \neq X_2] = 0.1$, which is strictly less than
$\Pr[X_3 \neq X_4] = 0.32$. Hence the diversity-seeking never selects
$\{1, 2\}$.
\end{proof}

\section{Additional Results}
\label{app:additional}

\subsection{Label-level Aggregation}
\label{app:label_agg}

In \cref{tab:acc-matrix-aime}, \ref{tab:acc-matrix-cladder}, and \ref{tab:acc-matrix-mmlu}, we present the accuracy of each proposer while answering the questions independently.
As we can see, Aya is a weak model as a proposer, while we observe that it is a fast and accurate summarizer. 

\begin{table*}[ht]
\centering
\caption{Independent accuracy for each proposer with rows indicating models, columns indicating prompt IDs on the AIME dataset.}
\label{tab:acc-matrix-aime}
\setlength{\tabcolsep}{8pt}
\renewcommand{\arraystretch}{1.15}
\begin{tabular}{lccccc}
\toprule
\textbf{Model} & \textbf{1} & \textbf{2} & \textbf{3} & \textbf{4} & \textbf{5} \\
\midrule
GPT-4o                      & 0.376 & 0.38 & 0.352 & 0.359 & 0.406 \\
AceReason-Nemotron-14B  & 0.379 & 0.394 & 0.392 & 0.376 & 0.372 \\
Llama-3.3-70B-Instruct  & 0.456 & 0.466 & 0.457 & 0.444 & 0.416 \\
QwQ-32B                 & 0.416 & 0.422 & 0.410 & 0.439 & 0.463 \\
Qwen3-32B               & 0.408 & 0.439 & 0.403 & 0.42 & 0.437 \\
Sky-T1-32B-Preview      & 0.387 & 0.398 & 0.381 & 0.383 & 0.388 \\
aya-expanse-32b         & 0.233 & 0.248 & 0.253 & 0.235 & 0.267 \\
Gemini1.5-pro             & 0.5 & 0.504 & 0.459 & 0.46 & 0.488 \\
\hline
\end{tabular}
\end{table*}

\begin{table*}[ht]
\centering
\caption{Independent accuracy for each proposer with rows indicating models, columns indicating prompt IDs on the CLadder dataset.}
\label{tab:acc-matrix-cladder}
\setlength{\tabcolsep}{8pt}
\renewcommand{\arraystretch}{1.15}
\begin{tabular}{lccccc}
\toprule
\textbf{Model} & \textbf{1} & \textbf{2} & \textbf{3} & \textbf{4} & \textbf{5} \\
\midrule
GPT-4o                     & 0.681 & 0.680 & 0.682 & 0.677 & 0.685 \\
AceReason-Nemotron-14B & 0.708 & 0.726 & 0.692 & 0.707 & 0.726 \\
Llama-3.3-70B-Instruct & 0.507 & 0.595 & 0.502 & 0.538 & 0.532 \\
QwQ-32B                & 0.775 & 0.789 & 0.803 & 0.802 & 0.797 \\
Qwen3-32B              & 0.678 & 0.717 & 0.710 & 0.707 & 0.742 \\
Sky-T1-32B-Preview     & 0.599 & 0.587 & 0.591 & 0.583 & 0.575 \\
aya-expanse-32b        & 0.526 & 0.548 & 0.527 & 0.511 & 0.519 \\
Gemini1.5-pro            & 0.707 & 0.704 & 0.718 & 0.734 & 0.748 \\
\bottomrule
\end{tabular}
\end{table*}

\begin{table*}[ht]
\centering
\caption{Independent accuracy for each proposer with rows indicating models, columns indicating prompt IDs on the MMLU-Pro dataset.}
\label{tab:acc-matrix-mmlu}
\setlength{\tabcolsep}{8pt}
\renewcommand{\arraystretch}{1.15}
\begin{tabular}{lccccc}
\toprule
\textbf{Model} & \textbf{1} & \textbf{2} & \textbf{3} & \textbf{4} & \textbf{5} \\
\midrule
GPT-4o                      & 0.45 & 0.416 & 0.444 & 0.45 & 0.418 \\
AceReason-Nemotron-14B  & 0.519 & 0.525 & 0.521 & 0.539 & 0.507 \\
Llama-3.3-70B-Instruct  & 0.473 & 0.463 & 0.444 & 0.454 & 0.478 \\
QwQ-32B                 & 0.582 & 0.589 & 0.573 & 0.576 & 0.577 \\
Qwen3-32B               & 0.64 & 0.637 & 0.622 & 0.616 & 0.648 \\
Sky-T1-32B-Preview      & 0.651 & 0.646 & 0.644 & 0.642 & 0.657 \\
aya-expanse-32b         & 0.343 & 0.356 & 0.344 & 0.253 & 0.368 \\
Gemini1.5-pro             & 0.549 & 0.53 & 0.536 & 0.537 & 0.533 \\
\hline
\end{tabular}
\end{table*}

We further test label-level aggregators against LLM summarizer aggregation, aiming to show that leveraging proposers’ textual reasoning can boost accuracy.
We evaluate three majority-vote variants: (i) over all proposers, (ii) over the best prompt per model, and (iii) over the best model per prompt.
We also include weighted majority vote, using the classic log-odds weights $w_i\propto\log\frac{p_i}{1-p_i}$
derived for independent binary voters by \citet{nitzan1982optimal}.
Although our setting involves multiclass labels and correlated voters, we adopt this weighting as a heuristic baseline. Finally, we consider a learning-based baseline that trains a decision tree on the $N$ proposers’ labels (features) to predict the ground truth, and report test accuracy.

As shown in \cref{tab:ensemble-acc-aime,tab:ensemble-acc-cladder,tab:ensemble-acc-mmlu}, majority-vote baselines perform competitively with the LLM summarizers on the binary CLadder dataset, yet they are outperformed by LLM summarizers on the multichoice AIME and MMLU-Pro datasets with the best method. These results underscore the importance of incorporating textual evidence from proposers’ reasoning, rather than relying solely on label-level aggregation.

\begin{table}[ht]
\centering
\caption{Label-level aggregation baselines on AIME.}
\label{tab:ensemble-acc-aime}
\setlength{\tabcolsep}{6pt}
\renewcommand{\arraystretch}{1.2}
\begin{tabular}{l c c}
\toprule
Method & \multicolumn{2}{c}{Accuracy} \\
\cmidrule(l{6pt}r{6pt}){2-3}
 & Unweighted & Weighted \\
\midrule
Majority & 0.627 & 0.644 \\
Majority (best prompt per model) & 0.583 & 0.597 \\
Majority (best model per prompt) & 0.580 & 0.580 \\
Decision Tree & \multicolumn{2}{c}{0.488} \\
\bottomrule
\end{tabular}
\end{table}

\begin{table}[ht]
\centering
\caption{Label-level aggregation baselines on CLadder.}
\label{tab:ensemble-acc-cladder}
\setlength{\tabcolsep}{6pt}
\renewcommand{\arraystretch}{1.2}

\begin{tabular}{l c c}
\toprule
Method & \multicolumn{2}{c}{Accuracy} \\
\cmidrule(l{6pt}r{6pt}){2-3}
 & Unweighted & Weighted \\
\midrule
Majority & 0.795 & 0.814 \\
Majority (best prompt per model) & 0.816 & 0.816 \\
Majority (best model per prompt) & 0.793 & 0.805 \\
Decision Tree & \multicolumn{2}{c}{0.737} \\
\bottomrule
\end{tabular}
\end{table}

\begin{table}[ht]
\centering
\caption{Label-level aggregation baselines on MMLU-Pro.}
\label{tab:ensemble-acc-mmlu}
\setlength{\tabcolsep}{6pt}
\renewcommand{\arraystretch}{1.2}

\begin{tabular}{l c c}
\toprule
Method & \multicolumn{2}{c}{Accuracy} \\
\cmidrule(l{6pt}r{6pt}){2-3}
 & Unweighted & Weighted \\
\midrule
Majority & 0.747 & 0.762 \\
Majority (best prompt per model) & 0.719 & 0.740 \\
Majority (best model per prompt) & 0.736 & 0.737 \\
Decision Tree & \multicolumn{2}{c}{0.439} \\
\bottomrule
\end{tabular}
\end{table}

\subsection{Comparing Proposer Selection Methods (Continued)}
\label{app:addition_results}

Here, we present the comparison of methods in other settings, aiming to show the robustness of our methods.
We further present the composition of the selected proposer pool by each method to better illustrate their pros and cons.

\paragraph{Ensemble Size}
\Cref{tab:comparison_aime_qwq_aya_k4} and \ref{tab:comparison_aime_qwq_aya_k3} report results for the (AIME, complete pool, Aya, $\cdot$) setting at $k\in\{3,4\}$ settings.
Note that the results for Input-all, Best-model, and MoA remain the same, as their performance does not depend on $k$.
Our results confirm the robustness of our complementary-MoA framework, as it remains competitive with the strongest baselines.
Overall, we do not observe a monotonic improvement in summarizer accuracy as the ensemble size increases.

\begin{table*}[!t]
  \centering
  \caption{A comparison of methods in the (AIME, complete pool, Aya, $k=3$) setting. Per column, the best accuracy is in \textbf{\textcolor{blue}{bold blue}}, the second-best is in \textcolor{blue}{blue}, and the worst is in \textcolor{red}{red}.}
  \label{tab:comparison_aime_qwq_aya_k3}
  \setlength{\tabcolsep}{4pt}
  \renewcommand{\arraystretch}{1.1}
  \begin{tabular}{l c c c c c c c c c}
    \toprule
    \multirow{2}{*}{Method} &
    \multicolumn{8}{c}{Selected proposers (\% of selections)} &
    \multirow{2}{*}{Accuracy} \\
    \cmidrule(lr){2-9}
     & QwQ & Qwen & Llama & Gemini & GPT & Sky & Aya & AceReason \\
    \midrule
    Input-all                & 12.5 & 12.5 & 12.5 & 12.5 & 12.5 & 12.5 & 12.5 & 12.5 & 0.645 \\
    Best-model               & 100  & \NA  & \NA  & \NA  & \NA  & \NA  & \NA  & \NA  & \textcolor{red}{0.351} \\
    Top-accuracy             & 6.7  & \NA  & 6.7  & 86.7 & \NA  & \NA  & \NA  & \NA  & 0.417 \\
    MoA                      & 12.5 & 12.5 & 12.5 & 12.5 & 12.5 & 12.5 & 12.5 & 12.5 & 0.594 \\
    Conditioned-diversity    & \NA  & \NA  & 6.7  & 33.3 & \NA  & \NA  & 33.3 & 26.7 & 0.416 \\
    Aya-dynamic              & \NA  & 35.3 & 52.9 & \NA  & \NA  & \NA  & \NA  & 11.8 & 0.414 \\
    GPT5.2-dynamic           & \NA  & \NA  & \NA  & 40   & \NA  & \NA  & \NA  & 60   & 0.470 \\
    \midrule
    Truth-prediction Greedy  & 13.3 & 33.3 & 13.3 & 33.3 & 6.7  & \NA  & \NA  & \NA  & \textbf{\textcolor{blue}{0.714}} \\
    Oracle-surrogate Greedy  & 40   & 6.7  & 20   & 33.3 & \NA  & \NA  & \NA  & \NA  & 0.528 \\
    Model-first Greedy       & 53.3 & 40   & \NA  & 6.7  & \NA  & \NA  & \NA  & \NA  & \textcolor{blue}{0.710} \\
    \bottomrule
  \end{tabular}
\end{table*}

\begin{table*}[!t]
  \centering
  \caption{A comparison of methods in the (AIME, complete pool, Aya, $k=4$) setting. Per column, the best accuracy is in \textbf{\textcolor{blue}{bold blue}}, the second-best is in \textcolor{blue}{blue}, and the worst is in \textcolor{red}{red}.}
  \label{tab:comparison_aime_qwq_aya_k4}
  \setlength{\tabcolsep}{4pt}
  \renewcommand{\arraystretch}{1.1}
  \begin{tabular}{l c c c c c c c c c}
    \toprule
    \multirow{2}{*}{Method} &
    \multicolumn{8}{c}{Selected proposers (\% of selections)} &
    \multirow{2}{*}{Accuracy} \\
    \cmidrule(lr){2-9}
     & QwQ & Qwen & Llama & Gemini & GPT & Sky & Aya & AceReason \\
    \midrule
    Input-all                & 12.5 & 12.5 & 12.5 & 12.5 & 12.5 & 12.5 & 12.5 & 12.5 & 0.746 \\
    Best-model               & 100  & \NA  & \NA  & \NA  & \NA  & \NA  & \NA  & \NA  & \textcolor{blue}{0.766} \\
    Top-accuracy             & 75   & 25   & \NA  & \NA  & \NA  & \NA  & \NA  & \NA  & 0.740 \\
    MoA                      & 12.5 & 12.5 & 12.5 & 12.5 & 12.5 & 12.5 & 12.5 & 12.5 & 0.660 \\
    Conditioned-diversity    & 25   & \NA  & 25   & \NA  & 25   & 25   & \NA  & \NA  & 0.620 \\
    Aya-dynamic              & \NA  & \NA  & 35   & 60   & \NA  & \NA  & \NA  & 5    & \textcolor{red}{0.424} \\
    GPT5.2-dynamic           & \NA  & \NA  & \NA  & 87.1 & \NA  & \NA  & \NA  & 12.9 & 0.472 \\
    \midrule
    Truth-prediction Greedy  & 50   & 50   & \NA  & \NA  & \NA  & \NA  & \NA  & \NA  & 0.740 \\
    Oracle-surrogate Greedy  & 50   & 25   & \NA  & 25   & \NA  & \NA  & \NA  & \NA  & 0.728 \\
    Model-first Greedy       & 25   & 25   & \NA  & \NA  & \NA  & \NA  & \NA  & 50   & \textbf{\textcolor{blue}{0.796}} \\
    \bottomrule
  \end{tabular}
\end{table*}

\paragraph{Other Settings}
Here, we present the results for the remaining settings. Due to computational cost, we omit the LLM-judge and Approximate Shapley baselines in some settings; where they were evaluated, neither outperformed our methods. Two patterns emerge consistently across the tables. First, our complementarity-driven methods\textemdash particularly Model-first Greedy\textemdash rank first or second in nearly every setting, regardless of dataset, summarizer, pool composition, or ensemble size $k$. Second, the relative ordering of baselines is highly setting-dependent: Best-model is the strongest baseline for MMLU-Pro with Aya yet among the worst on AIME, and Input-all dominates AIME under Aya but underperforms under AceReason. This sensitivity is precisely the failure mode our framework targets, and the consistent strength of complementarity-aware selection across the same settings is the central empirical takeaway.



\begin{table*}[!t]
  \centering
  \caption{A comparison of methods in the (AIME, complete pool, Aya, $k=5$) setting. Per column, the best accuracy is in \textbf{\textcolor{blue}{bold blue}}, the second-best is in \textcolor{blue}{blue}, and the worst is in \textcolor{red}{red}.}
  \label{tab:comparison_aime_dominant_aya_k5}
  \setlength{\tabcolsep}{4pt}
  \renewcommand{\arraystretch}{1.1}
  \begin{tabular}{l c c c c c c c c c}
    \toprule
    \multirow{2}{*}{Method} &
    \multicolumn{8}{c}{Selected proposers (\% of selections)} &
    \multirow{2}{*}{Accuracy} \\
    \cmidrule(lr){2-9}
     & QwQ & Qwen & Llama & Gemini & GPT & Sky & Aya & AceReason  \\
    \midrule
    Input-all                & 12.5 & 12.5 & 12.5 & 12.5 & 12.5 & 12.5 & 12.5 & 12.5 & \textbf{\textcolor{blue}{0.658}} \\
    Best-model               & \NA  & \NA  & \NA  & 100  & \NA  & \NA  & \NA  & \NA  & \textcolor{red}{0.349} \\
    Top-accuracy             & 8    & \NA  & 32   & 60   & \NA  & \NA  & \NA  & \NA  & 0.377 \\
    MoA                      & 12.5 & 12.5 & 12.5 & 12.5 & 12.5 & 12.5 & 12.5 & 12.5 & 0.580 \\
    Conditioned-diversity    & \NA  & \NA  & 20   & 20   & \NA  & \NA  & 40   & 20   & 0.483 \\
    Aya-dynamic              & \NA  & 32   & 60   & \NA  & \NA  & \NA  & \NA  & 8    & 0.435 \\
    GPT5.2-dynamic           & \NA  & \NA  & \NA  & 64   & \NA  & \NA  & \NA  & 36   & 0.448 \\
    Approximate Shapley     & \NA  & \NA & \NA  & \NA & \NA & \NA & \NA & 100   & 0.602 \\
    \midrule
    Truth-prediction Greedy  & 28   & 32   & 12   & 20   & \NA  & \NA  & \NA  & 8    & 0.611 \\
    Oracle-surrogate Greedy  & 48   & 8    & \NA  & 40   & \NA  & \NA  & \NA  & 4    & 0.607 \\
    Model-first Greedy       & 32   & 32   & 4    & 4    & \NA  & \NA  & \NA  & 28   & \textcolor{blue}{0.654} \\
    \bottomrule
  \end{tabular}
\end{table*}

\begin{table*}[!t]
  \centering
  \caption{A comparison of methods in the (AIME, reduced pool, Aya, $k=5$) setting. Per column, the best accuracy is in \textbf{\textcolor{blue}{bold blue}}, the second-best is in \textcolor{blue}{blue}, and the worst is in \textcolor{red}{red}.}
  \label{tab:comparison_aime_balanced_aya_k5}
  \setlength{\tabcolsep}{4pt}
  \renewcommand{\arraystretch}{1.1}
  \begin{tabular}{l c c c c c c c c c}
    \toprule
    \multirow{2}{*}{Method} &
    \multicolumn{7}{c}{Selected proposers (\% of selections)} &
    \multirow{2}{*}{Accuracy} \\
    \cmidrule(lr){2-8}
     & Qwen & Llama & Gemini & GPT & Sky & Aya & AceReason \\
    \midrule
    Input-all                & 14.3 & 14.3 & 14.3 & 14.3 & 14.3 & 14.3 & 14.3 & \textcolor{blue}{0.621} \\
    Best-model               & \NA  & \NA  & 100  & \NA  & \NA  & \NA  & \NA  & \textcolor{red}{0.332} \\
    Top-accuracy             & \NA  & 40   & 60   & \NA  & \NA  & \NA  & \NA  & 0.364 \\
    MoA (mixed)              & 14.3 & 14.3 & 14.3 & 14.3 & 14.3 & 14.3 & 14.3 & 0.562 \\
    Conditioned-diversity    & \NA  & 20   & 20   & \NA  & \NA  & 40   & 20   & 0.468 \\
    Aya-dynamic              & \NA  & 32   & 60   & \NA  & \NA  & \NA  & 8    & 0.415 \\
    GPT5.2-dynamic           & \NA  & \NA  & 59.3 & \NA  & \NA  & 33.3 & 7.4  & 0.450 \\
    Approximate Shapley     & \NA  & \NA & \NA  & \NA & \NA & \NA & 100   & 0.596 \\
    \midrule
    Truth-prediction Greedy  & 40   & 32   & 20   & 8    & \NA  & \NA  & \NA  & 0.522 \\
    Oracle-surrogate Greedy  & 20   & 20   & 20   & \NA  & 20   & \NA  & 20   & 0.497 \\
    Model-first Greedy       & 32   & 8    & 8    & \NA  & \NA  & \NA  & 52   & \textbf{\textcolor{blue}{0.632}} \\
    \bottomrule
  \end{tabular}
\end{table*}

\begin{table*}[!ht]
  \centering
  \caption{A comparison of methods in the (AIME, complete pool, AceReason, $k=5$) setting. Per column, the best accuracy is in \textbf{\textcolor{blue}{bold blue}}, the second-best is in \textcolor{blue}{blue}, and the worst is in \textcolor{red}{red}.}
  \label{tab:comparison_aime_qwq_ace_k5}
  \setlength{\tabcolsep}{4pt}
  \renewcommand{\arraystretch}{1.1}
  \begin{tabular}{l c c c c c c c c c}
    \toprule
    \multirow{2}{*}{Method} &
    \multicolumn{8}{c}{Selected proposers (\% of selections)} &
    \multirow{2}{*}{Accuracy} \\
    \cmidrule(lr){2-9}
     & QwQ & Qwen & Llama & Gemini & GPT & Sky & Aya & AceReason \\
    \midrule
    Input-all                & 12.5 & 12.5 & 12.5 & 12.5 & 12.5 & 12.5 & 12.5 & 12.5 & \textcolor{red}{0.313} \\
    Best-model               & \NA  & \NA  & \NA  & 100  & \NA  & \NA  & \NA  & \NA  & 0.379 \\
    Top-accuracy             & 8    & \NA  & 32   & 60   & \NA  & \NA  & \NA  & \NA  & 0.389 \\
    MoA                      & 12.5 & 12.5 & 12.5 & 12.5 & 12.5 & 12.5 & 12.5 & 12.5 & 0.445 \\
    Conditioned-diversity    & \NA  & \NA  & 20   & 20   & \NA  & \NA  & 40   & 20   & 0.476 \\
    Aya-dynamic              & \NA  & 32   & 60   & \NA  & \NA  & \NA  & \NA  & 8    & 0.436 \\
    GPT5.2-dynamic           & \NA  & \NA  & \NA  & 64   & \NA  & \NA  & \NA  & 36   & 0.487 \\
    \midrule
    Truth-prediction Greedy  & 12   & 36   & 20   & 20   & 12   & \NA  & \NA  & \NA  & 0.465 \\
    Oracle-surrogate Greedy  & 36   & 8    & 16   & 36   & \NA  & \NA  & \NA  & 4    & \textbf{\textcolor{blue}{0.502}} \\
    Model-first Greedy       & 28   & 12   & 4    & 20   & \NA  & \NA  & 4    & 32   & \textcolor{blue}{0.501} \\
    \bottomrule
  \end{tabular}
\end{table*}

\begin{table*}[!ht]
  \centering
  \caption{A comparison of methods in the (AIME, reduced pool, AceReason, $k=5$) setting. Per column, the best accuracy is in \textbf{\textcolor{blue}{bold blue}}, the second-best is in \textcolor{blue}{blue}, and the worst is in \textcolor{red}{red}.}
  \label{tab:comparison_aime_noqwq_ace_k5}
  \setlength{\tabcolsep}{4pt}
  \renewcommand{\arraystretch}{1.1}
  \begin{tabular}{l c c c c c c c c c}
    \toprule
    \multirow{2}{*}{Method} &
    \multicolumn{7}{c}{Selected proposers (\% of selections)} &
    \multirow{2}{*}{Accuracy} \\
    \cmidrule(lr){2-8}
     & Qwen & Llama & Gemini & GPT & Sky & Aya & AceReason \\
    \midrule
    Input-all                & 14.3 & 14.3 & 14.3 & 14.3 & 14.3 & 14.3 & 14.3 & \textcolor{red}{0.344} \\
    Best-model               & \NA  & \NA  & 100  & \NA  & \NA  & \NA  & \NA  & 0.380 \\
    Top-accuracy             & \NA  & 40   & 60   & \NA  & \NA  & \NA  & \NA  & 0.397 \\
    MoA                      & 14.3 & 14.3 & 14.3 & 14.3 & 14.3 & 14.3 & 14.3 & 0.441 \\
    Conditioned-diversity    & \NA  & 20   & 20   & \NA  & \NA  & 40   & 20   & \textcolor{blue}{0.478} \\
    Aya-dynamic              & \NA  & 32   & 60   & \NA  & \NA  & \NA  & 8    & 0.439 \\
    GPT5.2-dynamic           & \NA  & \NA  & 59.3 & \NA  & \NA  & 33.3 & 7.4  & 0.472 \\
    \midrule
    Truth-prediction Greedy  & 44   & 24   & 28   & \NA  & \NA  & \NA  & 4    & 0.466 \\
    Oracle-surrogate Greedy  & 16   & 8    & 76   & \NA  & \NA  & \NA  & \NA  & 0.406 \\
    Model-first Greedy       & 24   & 4    & 8    & 4    & 4    & 24   & 32   & \textbf{\textcolor{blue}{0.498}} \\
    \bottomrule
  \end{tabular}
\end{table*}

\begin{table*}[!ht]
  \centering
  \caption{A comparison of methods in the (Cladder, complete pool, AceReason, $k=5$) setting. Per column, the best accuracy is in \textbf{\textcolor{blue}{bold blue}}, the second-best is in \textcolor{blue}{blue}, and the worst is in \textcolor{red}{red}.}
  \label{tab:comparison_cladder_qwq_ace_k5}
  \setlength{\tabcolsep}{4pt}
  \renewcommand{\arraystretch}{1.1}
  \begin{tabular}{l c c c c c c c c c}
    \toprule
    \multirow{2}{*}{Method} &
    \multicolumn{8}{c}{Selected proposers (\% of selections)} &
    \multirow{2}{*}{Accuracy} \\
    \cmidrule(lr){2-9}
     & QwQ & Qwen & Llama & Gemini & GPT & Sky & Aya & AceReason \\
    \midrule
    Input-all                & 12.5 & 12.5 & 12.5 & 12.5 & 12.5 & 12.5 & 12.5 & 12.5 & \textcolor{blue}{0.801} \\
    Best-model               & 100  & \NA  & \NA  & \NA  & \NA  & \NA  & \NA  & \NA  & 0.786 \\
    Top-accuracy             & 100  & \NA  & \NA  & \NA  & \NA  & \NA  & \NA  & \NA  & 0.777 \\
    MoA                      & 12.5 & 12.5 & 12.5 & 12.5 & 12.5 & 12.5 & 12.5 & 12.5 & 0.757 \\
    Conditioned-diversity    & 20   & \NA  & 20   & \NA  & \NA  & \NA  & 60   & \NA  & 0.742 \\
    Aya-dynamic              & 52   & 48   & \NA  & \NA  & \NA  & \NA  & \NA  & \NA  & \textcolor{red}{0.720} \\
    GPT5.2-dynamic           & \NA  & 40   & \NA  & 60   & \NA  & \NA  & \NA  & \NA  & 0.760 \\
    \midrule
    Truth-prediction Greedy  & 40   & 20   & 8    & 20   & \NA  & 4    & \NA  & 8    & 0.762 \\
    Oracle-surrogate Greedy  & 60   & 12   & \NA  & 28   & \NA  & \NA  & \NA  & \NA  & 0.752 \\
    Model-first Greedy       & \NA  & \NA  & \NA  & 60   & 40   & \NA  & \NA  & \NA  & \textbf{\textcolor{blue}{0.812}} \\
    \bottomrule
  \end{tabular}
\end{table*}

\begin{table*}[!ht]
  \centering
  \caption{A comparison of methods in the (Cladder, reduced pool, AceReason, $k=5$) setting. Per column, the best accuracy is in \textbf{\textcolor{blue}{bold blue}}, the second-best is in \textcolor{blue}{blue}, and the worst is in \textcolor{red}{red}.}
  \label{tab:comparison_cladder_noqwq_ace_k5}
  \setlength{\tabcolsep}{4pt}
  \renewcommand{\arraystretch}{1.1}
  \begin{tabular}{l c c c c c c c c c}
    \toprule
    \multirow{2}{*}{Method} &
    \multicolumn{7}{c}{Selected proposers (\% of selections)} &
    \multirow{2}{*}{Accuracy} \\
    \cmidrule(lr){2-8}
     & Qwen & Llama & Gemini & GPT & Sky & Aya & AceReason \\
    \midrule
    Input-all                & 14.3 & 14.3 & 14.3 & 14.3 & 14.3 & 14.3 & 14.3 & 0.790 \\
    Best-model               & \NA  & \NA  & 80   & \NA  & \NA  & \NA  & 20   & \textcolor{blue}{0.793} \\
    Top-accuracy             & 20   & \NA  & 40   & \NA  & \NA  & \NA  & 40   & 0.780 \\
    MoA                      & 14.3 & 14.3 & 14.3 & 14.3 & 14.3 & 14.3 & 14.3 & 0.760 \\
    Conditioned-diversity    & 8    & 20   & 12   & \NA  & \NA  & 60   & \NA  & 0.765 \\
    Aya-dynamic              & 88   & \NA  & \NA  & \NA  & 12   & \NA  & \NA  & \textcolor{red}{0.748} \\
    GPT5.2-dynamic           & 40   & \NA  & 60   & \NA  & \NA  & \NA  & \NA  & 0.763 \\
    \midrule
    Truth-prediction Greedy  & 32   & 12   & 28   & \NA  & \NA  & \NA  & 28   & 0.761 \\
    Oracle-surrogate Greedy  & 28   & \NA  & 40   & 4    & \NA  & \NA  & 28   & 0.765 \\
    Model-first Greedy       & \NA  & \NA  & 56   & 44   & \NA  & \NA  & \NA  & \textbf{\textcolor{blue}{0.802}} \\
    \bottomrule
  \end{tabular}
\end{table*}

\begin{table*}[!ht]
  \centering
  \caption{A comparison of methods in the (MMLU-Pro, complete pool, Aya, $k=5$) setting. Per column, the best accuracy is in \textbf{\textcolor{blue}{bold blue}}, the second-best is in \textcolor{blue}{blue}, and the worst is in \textcolor{red}{red}.}
  \label{tab:comparison_mmlu_sky_aya_k5}
  \setlength{\tabcolsep}{4pt}
  \renewcommand{\arraystretch}{1.1}
  \begin{tabular}{l c c c c c c c c c}
    \toprule
    \multirow{2}{*}{Method} &
    \multicolumn{8}{c}{Selected proposers (\% of selections)} &
    \multirow{2}{*}{Accuracy} \\
    \cmidrule(lr){2-9}
     & QwQ & Qwen & Llama & Gemini & GPT & Sky & Aya & AceReason \\
    \midrule
    Input-all                & 12.5 & 12.5 & 12.5 & 12.5 & 12.5 & 12.5 & 12.5 & 12.5 & \textcolor{red}{0.377} \\
    Best-model               & \NA  & \NA  & \NA  & \NA  & \NA  & 100  & \NA  & \NA  & \textbf{\textcolor{blue}{0.524}} \\
    Top-accuracy             & \NA  & 36   & \NA  & \NA  & \NA  & 64   & \NA  & \NA  & 0.495 \\
    MoA                      & 12.5 & 12.5 & 12.5 & 12.5 & 12.5 & 12.5 & 12.5 & 12.5 & 0.448 \\
    Conditioned-diversity    & \NA  & 8    & 20   & \NA  & 20   & 12   & 40   & \NA  & 0.397 \\
    Aya-dynamic              & \NA  & 60   & 36   & \NA  & \NA  & 4    & \NA  & \NA  & 0.462 \\
    GPT5.2-dynamic           & \NA  & \NA  & \NA  & 100  & \NA  & \NA  & \NA  & \NA  & 0.455 \\
    \midrule
    Truth-prediction Greedy  & 8    & 40   & \NA  & \NA  & 4    & \NA  & 48   & \NA  & 0.485 \\
    Oracle-surrogate Greedy  & 4    & 56   & \NA  & \NA  & \NA  & 40   & \NA  & \NA  & 0.487 \\
    Model-first Greedy       & 52   & 36   & 4    & \NA  & 4    & \NA  & \NA  & 4    & \textcolor{blue}{0.502} \\
    \bottomrule
  \end{tabular}
\end{table*}

\begin{table*}[!ht]
  \centering
  \caption{A comparison of methods in the (MMLU-Pro, reduced pool, Aya, $k=5$) setting. Per column, the best accuracy is in \textbf{\textcolor{blue}{bold blue}}, the second-best is in \textcolor{blue}{blue}, and the worst is in \textcolor{red}{red}.}
  \label{tab:comparison_mmlu_nosky_aya_k5}
  \setlength{\tabcolsep}{4pt}
  \renewcommand{\arraystretch}{1.1}
  \begin{tabular}{l c c c c c c c c c}
    \toprule
    \multirow{2}{*}{Method} &
    \multicolumn{7}{c}{Selected proposers (\% of selections)} &
    \multirow{2}{*}{Accuracy} \\
    \cmidrule(lr){2-8}
     & QwQ & Qwen & Llama & Gemini & GPT & Aya & AceReason \\
    \midrule
    Input-all                & 14.3 & 14.3 & 14.3 & 14.3 & 14.3 & 14.3 & 14.3 & \textcolor{red}{0.346} \\
    Best-model               & \NA  & 100  & \NA  & \NA  & \NA  & \NA  & \NA  & \textcolor{blue}{0.482} \\
    Top-accuracy             & \NA  & 100  & \NA  & \NA  & \NA  & \NA  & \NA  & 0.481 \\
    MoA                      & 14.3 & 14.3 & 14.3 & 14.3 & 14.3 & 14.3 & 14.3 & 0.464 \\
    Conditioned-diversity    & \NA  & 20   & 20   & \NA  & 20   & 40   & \NA  & 0.404 \\
    Aya-dynamic              & \NA  & 64   & 36   & \NA  & \NA  & \NA  & \NA  & 0.460 \\
    GPT5.2-dynamic           & \NA  & \NA  & \NA  & 100  & \NA  & \NA  & \NA  & 0.455 \\
    \midrule
    Truth-prediction Greedy  & 4    & 80   & \NA  & 12   & 4    & \NA  & \NA  & 0.478 \\
    Oracle-surrogate Greedy  & 12   & 76   & 4    & 8    & \NA  & \NA  & \NA  & \textbf{\textcolor{blue}{0.487}} \\
    Model-first Greedy       & 32   & 48   & 4    & 4    & \NA  & \NA  & 12   & 0.475 \\
    \bottomrule
  \end{tabular}
\end{table*}

\begin{table*}[!ht]
  \centering
  \caption{A comparison of methods in the (MMLU-Pro, complete pool, AceReason, $k=5$) setting. Per column, the best accuracy is in \textbf{\textcolor{blue}{bold blue}}, the second-best is in \textcolor{blue}{blue}, and the worst is in \textcolor{red}{red}.}
  \label{tab:comparison_mmlu_sky_ace_k5}
  \setlength{\tabcolsep}{4pt}
  \renewcommand{\arraystretch}{1.1}
  \begin{tabular}{l c c c c c c c c c}
    \toprule
    \multirow{2}{*}{Method} &
    \multicolumn{8}{c}{Selected proposers (\% of selections)} &
    \multirow{2}{*}{Accuracy} \\
    \cmidrule(lr){2-9}
     & QwQ & Qwen & Llama & Gemini & GPT & Sky & Aya & AceReason \\
    \midrule
    Input-all                & 12.5 & 12.5 & 12.5 & 12.5 & 12.5 & 12.5 & 12.5 & 12.5 & 0.724 \\
    Best-model               & \NA  & \NA  & \NA  & \NA  & \NA  & 100  & \NA  & \NA  & 0.722 \\
    Top-accuracy             & \NA  & 36   & \NA  & \NA  & \NA  & 64   & \NA  & \NA  & 0.746 \\
    MoA                      & 12.5 & 12.5 & 12.5 & 12.5 & 12.5 & 12.5 & 12.5 & 12.5 & 0.737 \\
    Conditioned-diversity    & \NA  & 8    & 20   & \NA  & 20   & 12   & 40   & \NA  & 0.683 \\
    Aya-dynamic              & \NA  & 60   & 36   & \NA  & \NA  & 4    & \NA  & \NA  & 0.710 \\
    GPT5.2-dynamic           & \NA  & \NA  & \NA  & 100  & \NA  & \NA  & \NA  & \NA  & \textcolor{red}{0.565} \\
    \midrule
    Truth-prediction Greedy  & 4    & 52   & 4    & \NA  & \NA  & \NA  & 40   & \NA  & \textcolor{blue}{0.755} \\
    Oracle-surrogate Greedy  & 20   & 40   & \NA  & \NA  & \NA  & 40   & \NA  & \NA  & \textbf{\textcolor{blue}{0.765}} \\
    Model-first Greedy       & 40   & 16   & \NA  & 12   & 8    & 8    & 16   & \NA  & 0.738 \\
    \bottomrule
  \end{tabular}
\end{table*}

\begin{table*}[!ht]
  \centering
  \caption{A comparison of methods in the (MMLU-Pro, reduced pool, AceReason, $k=5$) setting. Per column, the best accuracy is in \textbf{\textcolor{blue}{bold blue}}, the second-best is in \textcolor{blue}{blue}, and the worst is in \textcolor{red}{red}.}
  \label{tab:comparison_mmlu_nosky_ace_k5}
  \setlength{\tabcolsep}{4pt}
  \renewcommand{\arraystretch}{1.1}
  \begin{tabular}{l c c c c c c c c c}
    \toprule
    \multirow{2}{*}{Method} &
    \multicolumn{7}{c}{Selected proposers (\% of selections)} &
    \multirow{2}{*}{Accuracy} \\
    \cmidrule(lr){2-8}
     & QwQ & Qwen & Llama & Gemini & GPT & Aya & AceReason \\
    \midrule
    Input-all                & 14.3 & 14.3 & 14.3 & 14.3 & 14.3 & 14.3 & 14.3 & 0.687 \\
    Best-model               & \NA  & 100  & \NA  & \NA  & \NA  & \NA  & \NA  & 0.664 \\
    Top-accuracy             & \NA  & 100  & \NA  & \NA  & \NA  & \NA  & \NA  & 0.666 \\
    MoA                      & 14.3 & 14.3 & 14.3 & 14.3 & 14.3 & 14.3 & 14.3 & 0.687 \\
    Conditioned-diversity    & \NA  & 20   & 20   & \NA  & 20   & 40   & \NA  & 0.669 \\
    Aya-dynamic              & \NA  & 64   & 36   & \NA  & \NA  & \NA  & \NA  & \textcolor{blue}{0.692} \\
    GPT5.2-dynamic           & \NA  & \NA  & \NA  & 100  & \NA  & \NA  & \NA  & \textcolor{red}{0.568} \\
    \midrule
    Truth-prediction Greedy  & 4    & 80   & \NA  & 12   & 4    & \NA  & \NA  & 0.678 \\
    Oracle-surrogate Greedy  & 20   & 64   & \NA  & 12   & 4    & \NA  & \NA  & 0.687 \\
    Model-first Greedy       & 28   & 24   & \NA  & 16   & 12   & 20   & \NA  & \textbf{\textcolor{blue}{0.711}} \\
    \bottomrule
  \end{tabular}
\end{table*}

\clearpage
\section{Prompts}
\label{app:prompt}

\PromptBox{Multi-choice — Proposer Prompt}{%
You will solve a multiple choice question.
Format your answer to include:
\begin{enumerate}[nosep]
  \item A full response
  \item A concise step-by-step reasoning
  \item The single letter choice
\end{enumerate}
}
\PromptBox{Binary-choice — Proposer Prompt}{%
You will answer a yes or no question.
Format your answer to include:
\begin{enumerate}[nosep]
  \item A full response
  \item A concise step-by-step reasoning
  \item The yes or no answer
\end{enumerate}
}

\PromptBox{Multi-choice — Summarizer Prompt}{%
I will give you a multiple choice question and potential solutions that may be correct or incorrect. Your task is to analyze the reasoning of the potential solutions step by step.

If there are any errors, correct them and update your answer.

If there are no errors, answer the question matching those solutions.

Your answer must be in the format of a full response, then a letter choice.
}
\PromptBox{Binary-choice — Summarizer Prompt}{%
I will give you a yes or no question and multiple potential solutions that may be correct or incorrect. Your task is to analyze the reasoning of the potential solutions step by step.

If there are any errors, correct them and update your answer.

If there are no errors, answer the question matching those solutions.

Your answer must be in the format of a full response, then a yes or no answer.
}

\PromptBox{Instruction Prompt 1}{%
Divide the question into smaller, manageable parts and tackle each part individually before synthesizing the overall answer.
}

\PromptBox{Instruction Prompt 2}{%
Use mathematical principles and logic to solve the problem, even if it’s not a math question.
}

\PromptBox{Instruction Prompt 3}{%
Relate the question to a familiar concept or situation to better understand and solve it.
}

\PromptBox{Instruction Prompt 4}{%
Think about what the answer would be if the opposite were true, to gain a different perspective.
}

\PromptBox{Instruction Prompt 5}{%
Eliminate the obviously incorrect answers first and then choose the most likely correct answer.
}

\section{LLM Usage}

Large language models (LLMs) were used in this paper only as a general-purpose writing assistant. Specifically, they supported adjusting phrasing for clarity, polishing grammar, shortening sentences, and reformatting text. LLMs were also used to generate and refine tables (e.g., aligning multi-column headers and converting between LaTeX table styles). At no point did LLMs contribute to research ideas, conceptual framing, or experimental design. All substantive intellectual contributions are solely those of the authors.

\end{document}